\documentclass[twoside, 11pt]{article}

\usepackage{graphicx, subcaption}
\usepackage[margin=1in, footnotesep=0.5in]{geometry}

\usepackage[sort&compress, numbers]{natbib} \setlength{\bibsep}{0.0pt}


\usepackage{amsfonts, amsmath, amssymb, amsthm}
\usepackage[shortlabels]{enumitem}
\usepackage{booktabs}
\usepackage{mathtools}
\mathtoolsset{showonlyrefs}


\usepackage{color}
\definecolor{orange}{RGB}{195,125,0}
\definecolor{customdarkred}{RGB}{150,0,0}
\definecolor{customdarkgreen}{RGB}{0,150,0}
\definecolor{customdarkblue}{RGB}{0,0,150}


\usepackage{hyperref}
\hypersetup{colorlinks=true, linkcolor=customdarkred, citecolor=customdarkgreen, urlcolor=customdarkblue}
\usepackage{url}
\usepackage[linesnumbered, ruled]{algorithm2e}
\usepackage{authblk}

\def\level{L}
\def\up{U}
\def\supp{\text{supp}}


\newtheorem{theorem}{Theorem}[section]
\newtheorem{proposition}[theorem]{Proposition}
\newtheorem{lemma}[theorem]{Lemma}

\theoremstyle{definition}
\newtheorem{definition}[theorem]{Definition}
\newtheorem{example}[theorem]{Example}

\theoremstyle{remark}
\newtheorem{remark}[theorem]{Remark}

\numberwithin{equation}{section}
\numberwithin{figure}{section}

\def\beq{\begin{equation}} 
\def\eeq{\end{equation}}
\def\beqn{\begin{eqnarray*}}
\def\eeqn{\end{eqnarray*}}
\def\Bitem{\begin{itemize}\setlength{\itemsep}{.2in}}
\def\bitem{\begin{itemize}\setlength{\itemsep}{.05in}}
\def\eitem{\end{itemize}}
\def\Benum{\begin{enumerate}\setlength{\itemsep}{.2in}}
\def\benum{\begin{enumerate}\setlength{\itemsep}{.05in}}
\def\eenum{\end{enumerate}}
\def\bmult{\begin{multline*}}
\def\emult{\end{multline*}}
\def\bcenter{\begin{center}}
\def\ecenter{\end{center}}
\def\bframe{\begin{frame}}
\def\eframe{\end{frame}}

\newcommand{\thmref}[1]{Theorem~\ref{thm:#1}}
\newcommand{\prpref}[1]{Proposition~\ref{prp:#1}}

\newcommand{\lemref}[1]{Lemma~\ref{lem:#1}}
\newcommand{\secref}[1]{Section~\ref{sec:#1}}
\newcommand{\figref}[1]{Figure~\ref{fig:#1}}

\newcommand{\defref}[1]{Definition~\ref{def:#1}}


\DeclareMathOperator{\diam}{diam}



\def\cA{\mathcal{A}}

\def\cC{\mathcal{C}}

\def\cF{\mathcal{F}}

\def\cH{\mathcal{H}}

\def\cL{\mathcal{L}}

\def\cN{\mathcal{N}}

\def\cS{\mathcal{S}}

\def\cX{\mathcal{X}}





\def\bbI{\mathbb{I}}

\def\bbR{\mathbb{R}}

\def\bbZ{\mathbb{Z}}





\def\eps{\epsilon}
\def\implies{\ \Rightarrow \ }
\def\iff{\ \Leftrightarrow \ }
\DeclareMathOperator{\sign}{sign}

\def\1{\mathbbm{1}}



\pagestyle{myheadings}
\raggedbottom

\title{An Axiomatic Definition of Hierarchical Clustering}
\author[1,2]{Ery Arias-Castro}
\author[1]{Elizabeth Coda} 
\affil[1]{\small Department of Mathematics, University of California, San Diego} 
\affil[2]{\small Halıcıoğlu Data Science Institute, University of California, San Diego}
\date{}

\begin{document}
\maketitle
\thispagestyle{empty}

\begin{abstract}
In this paper, we take an axiomatic approach to defining a population hierarchical clustering for piecewise constant densities, and in a similar manner to Lebesgue integration, extend this definition to more general densities. When the density satisfies some mild conditions, e.g., when it has connected support, is continuous, and vanishes only at infinity, or when the connected components of the density satisfy these conditions, our axiomatic definition results in Hartigan's definition of cluster tree.  
\end{abstract}

\section{Introduction}
\label{sec:intro}

Clustering, informally understood as the grouping of data, is a central task in statistics and computer science with broad applications. Modern clustering algorithms originated in the work of numerical taxonomists, who developed methods to identify hierarchical structures in the classification of plant and animal species. Since then clustering has been used in disciplines such as medicine, astronomy, anthropology, economics, etc., with aims such as exploratory analysis, data summarization, the identification of salient structures in data, and information organization. 

The notion of a ``good'' or ``accurate'' clustering varies between fields and applications. For example, to some computer scientists, the correct clustering of a dataset is often defined as the solution to an optimization problem (think K-means) and a good algorithm either solves or approximates a solution to this problem, ideally with some guarantees~\cite{puzicha2000, dasgupta2016}. From this perspective, the dataset is viewed as fixed, and the cluster definition is based on the data alone~\cite{hennig2015}. Moreover, depending on the particular application, how good a clustering is deemed to be may be further loosened, such as in the task of image segmentation, where a good clustering need only ``extract the global impression of an image'' according to~\cite{shi2000}. 

Even as this view of clustering is widespread well outside computer science, it is not satisfactory from a statistical inference perspective. Indeed, in statistics, it is typically assumed that the sample is representative of an underlying population and a clustering method, to be useful, should inform the analyst about that population. This viewpoint calls for a definition of clustering at the population level. When the sample is assumed iid from an underlying distribution representing the population, by clustering we mean a partition of the support of that distribution, and in that case, a clustering of the sample is deemed ``good'' or ``accurate'' by reference to the population clustering --- and a clustering algorithm is a good one if it is consistent, meaning, exact, in the large-sample limit. This reference to the population is what gives meaning to statistical inference, and to questions such as whether an observed cluster is ``real'' or not. 

However, there is not a generally accepted definition of clustering at the population level. One popular approach assumes that the data is drawn from a mixture model $f = \sum_{i=1}^{k} \alpha_i f_i$  and the population level clustering consists of $k$ clusters corresponding to the mixture components \cite{fraley2002, bouveyron2019}. If the underlying density is not a mixture, it can be approximated by a mixture model (typically chosen to be a multivariate Gaussian), though this requires a modeling choice. This approximation may require a very large number of components to approximate well, resulting in an artificially large number of clusters. Moreover, even if the density is a mixture, under this definition, a unimodal mixture could have multiple clusters. Alternatively, in the gradient flow approach to defining the population level clustering, often attributed to \citet{fukunaga1975}, each point is assigned to the nearest mode (i.e., local maximum) in the direction of the gradient. Thus, at least when the density has Morse regularity, the clusters correspond to the basin of attraction of each mode. Though this definition relies on assumptions about the smoothness of the density and does not account for arbitrarily flat densities~\cite{menardi2016}, it overcomes some of the  described difficulties of the mixture model clustering. If the components in the mixture model are well-separated, this definition results in a similar clustering to the mixture-based definition~\cite{chacon2020}. 

Taking a hierarchical perspective of clustering, \citet{hartigan1975} has proposed a population-level cluster tree, where clusters correspond to the maximally connected components of density upper level sets. Though Hartigan provides minimal motivation for this definition beyond observing that each cluster $C$ in his tree ``conforms to the informal requirement that $C$ is a high-density region surrounded by a low-density region'' \cite{hartigan1975}, this is generally accepted as the definition of hierarchical clustering at the population level and has been used in subsequent works~\cite{eldridge2015, wang2019, chaudhuri2014, kim2016,balakrishnan2013,steinwart2011}. It has been shown that Hartigan's definition of hierarchical clustering is fully compatible with Fukunaga and Hostetler's definition of clustering~\cite{arias2023}. 

Several works have explored axiomatic approaches to defining clustering algorithms that take as input a finite number of data points~\cite{kleinberg2002, zadeh2009, ben2008, puzicha2000, jardine1968, carlsson2010}. In each of these, the authors state desirable characteristics of a clustering function (or clustering criterion) and identify algorithms that satisfy these requirements, or in  case of~\cite{kleinberg2002}, prove the non-existence of such an algorithm. Inspired by these axiomatic approaches, we devote a significant portion of the paper to developing an axiom-based definition of a population hierarchical clustering. We focus on hierarchical clustering, rather than flat clustering, as we find this to be a simpler starting point. We recover Hartigan's definition of hierarchical clustering for densities with connected support that satisfy continuity and some additional mild assumptions, as well as densities with finitely many connected components, each of which satisfy these conditions. 

\subsection{Related Work}
\label{sec:related_work} While we take an axiomatic approach to defining the population-level hierarchical clustering, several previous works have explored axiomatic approaches to defining clustering algorithms. The most famous of which might be that of~\cite{kleinberg2002}, where three axioms are proposed (scale-invariance, richness, and consistency) and an impossibility theorem is established, proving that no clustering algorithm can simultaneously satisfy all three axioms. (The `consistency' axiom is not in the statistical sense, but refers to the property that if within-cluster distances are decreased and between-cluster distances are enlarged, then the output clustering does not change.)

However, as has been pointed out by others~\cite{ben2008, strazzeri2022, Cohen-Addad2018}, including Kleinberg himself in Section 5 of the same article~\cite{kleinberg2002}, the consistency property may not be so desirable. Rather, a relaxation of this property, in which a refinement or coarsening of the clusters is allowed, may be more appropriate. Kleinberg states that clustering algorithms that satisfy scale-invariance, richness, and this relaxed notion of refinement-coarsening consistency do exist and clustering algorithms that satisfy scale-invariance, near richness, and refinement consistency also exist. 
This was, in a sense, confirmed by \citet{Cohen-Addad2018}, who allow the number of clusters to vary with the refinement.
\citet{zadeh2009} show that, if the number of clusters that a clustering algorithm can return is fixed at $k$, there exist clustering algorithms that satisfy scale-invariance, $k$-richness, and consistency (in the original sense). They also show that single linkage is the unique clustering algorithm returning a fixed number of clusters simultaneously satisfying these axioms and two additional axioms. 

\citet{puzicha2000} consider clustering data via optimization of a suitable objective function and define a suitable objective function with a set of axioms. Though their axioms are somewhat strong, requiring the objective function have an additive structure, they show that only one of the objective functions considered satisfies all of their axioms. \citet{ben2008} also propose a set of axioms which strongly parallel Kleinberg's axioms for a clustering quality measure function and show the existence of functions satisfying these axioms. 

In the 1960s and 1970s, there were a number of articles examining the axiomatic foundation of clustering. \citet{cormack1971review} provides a comprehensive review.
For example, \citet{jardine1968, jardine1967structure, SIBSON1970405}, list axioms that, according to them, a hierarchical clustering algorithm should satisfy, and then state that single linkage is the only algorithm they are aware of that satisfies all of their axioms. 
More recently, \citet{carlsson2010} propose their own sets of axioms for hierarchical clustering, and then prove that single linkage is the only algorithm that satisfies them. Though this result has been presented as a demonstration that Kleinberg's impossibility theorem does not hold when hierarchical clustering algorithms are considered, this connection is somewhat unclear to us, as the proposed axioms do not mirror Kleinberg's axioms very precisely.  

\subsection{Content}
The organization of the paper is as follows. 
\secref{prelim} provides some basic notation and definitions.
In \secref{axioms}, we take an axiomatic approach to defining a hierarchical clustering for a piecewise constant density with connected support. 
In \secref{continuous}, we extend this definition to continuous densities, first to densities with connected support, and then to more general densities. 
\secref{discussion} is a discussion section where we go over some extensions, some practical considerations, and also discuss some outlook on flat clustering.
In an appendix, we provide a close examination of the merge distortion metric in \secref{merge}, and provide further technical details for the special case of a Euclidean space in \secref{euclidean}.

\section{Preliminaries}
\label{sec:prelim}
Throughout this paper, we will work with a metric space $(\Omega, d)$. For technical reasons, we assume it is locally connected, which is for example the case if the balls are connected. This is so that the connected components of an open set are connected.\footnote{This is, in fact, an equivalence, the proof of which is left as an exercise in Armstrong's textbook \cite[Ch 3]{alma991008609639706535}.}

In principle, we would equip  this metric space with a suitable Borel measure, and consider densities with respect to that measure. As it turns out, this equipment is not needed as we can directly work with non-negative functions. We will do so for the most part, although we will sometimes talk about densities.

For a set $A \subseteq \Omega$, we let $\text{int}(A)$ or $A^\circ$ denote its interior and $\text{clo}(A)$ or $\overline{A}$ denote its closure; we also let $\text{cc}(A)$ denote the collection of its connected components.
For a function $f :\Omega \to \bbR$, its support is $\supp(f) = \text{clo}\{f \ne 0\}$, and for $\lambda \in \bbR$, its upper $\lambda$-level set is defined as $\{f \ge \lambda\}$, denoted $\up_\lambda$ when there is no ambiguity. 

\begin{definition}[\textbf{Hierarchical clustering or cluster tree}]
\label{def:cluster tree}
A hierarchical clustering, or cluster tree, of $\cX \subseteq \Omega$ is a collection of connected subsets of $\cX$, referred to as clusters, that has a nested structure in that two clusters are either disjoint or nested.
\end{definition}

A hierarchical clustering of a function $f$ is understood as a hierarchical clustering of its support $\supp(f)$. 
Hartigan's definition of hierarchical clustering for a density is arguably the most well-known one. 

\begin{definition}[\textbf{Hartigan cluster tree}]
\label{def:hartigan tree}
The Hartigan cluster tree of a function $f$, which will be denoted $\cH_f$,  is the collection consisting of the maximally connected components of the upper $\lambda$-level sets of $f$ for all $\lambda > 0$. $\cH_f$ is a hierarchical clustering of $\supp(f)$.
\end{definition}

A dendrogram is commonly understood as the output of a hierarchical clustering algorithms such as single-linkage clustering. It turns out to be simpler to work with dendrograms instead of directly with cluster trees \cite{carlsson2010, eldridge2015}. 

\begin{definition}[\textbf{Dendrogram}]
A dendrogram is a cluster tree equipped with a real-valued non-increasing function defined on the cluster tree called the height function. A dendrogram is thus of the form $(\cC, h)$ where $\cC$ is a cluster tree and $h : \cC \to \bbR$ is such that $h(C') \ge h(C)$ whenever $C' \subseteq C$.
\end{definition}

The Hartigan tree of a function $f$ is naturally equipped with the following height function 
\begin{equation}
\label{height}
h_f(C) = \inf\limits_{x \in C} f(x).
\end{equation} 
Note that this function has the required monotonicity.

\citet{eldridge2015} introduced the merge distortion metric to compare dendrograms. It is based on the notion of merge height, which gives the height at which two points stop belonging to the same cluster, or equivalently, the height of the smallest cluster that contains both points.

\begin{definition}[\textbf{Merge height}]
Let $(\cC, h)$ be a dendrogram. The merge height of two points $x,y \in \Omega$ is defined as
\begin{equation}
\label{merge height}
m_{(\cC, h)}(x,y) = \sup\limits_{\substack{C \in \cC \\  x,y \in C}} h(C).
\end{equation}
\end{definition}

For the special case of an Hartigan cluster tree,
\begin{equation}
\label{merge height hartigan}
m_f(x,y) = m_{(\cH_f, h_f)}(x,y) = \sup\limits_{\substack{C \text{ connected } \\ x,y \in C}} \inf_{z \in C} f(z).
\end{equation}

\begin{definition}[\textbf{Merge distortion metric}]
\label{def:merge metric}
Let $(\cC, h)$ and $(\cC', h')$ be two dendrograms. Their merge distortion distance is defined as
$$
d_M((\cC,h), (\cC',h')) = \sup\limits_{x,y \in \Omega} \big|m_{(\cC, h)}(x,y) - m_{(\cC', h')}(x,y)\big|. 
$$
\end{definition}

\begin{figure}[h!]
    \centering
    \includegraphics[width=0.8\linewidth]{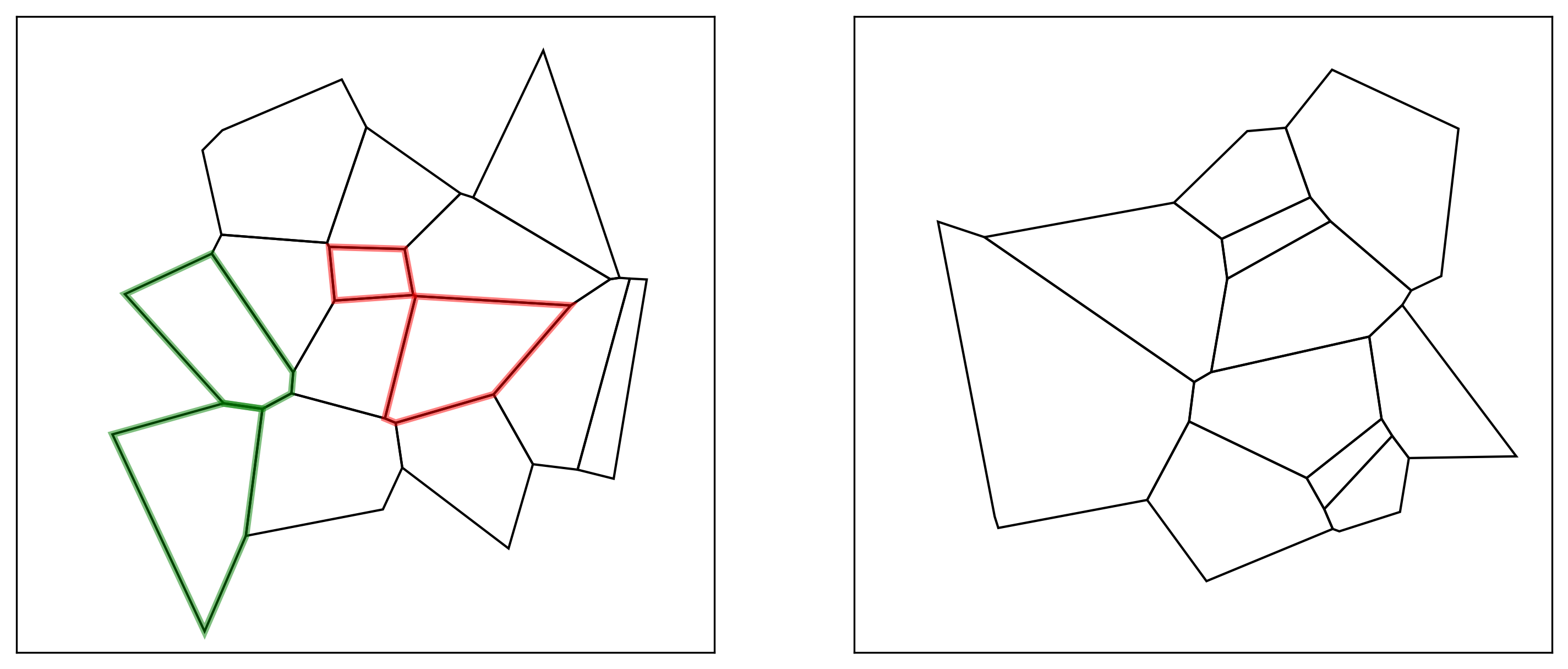}
    \caption{Left: A collection of sets with neighboring regions (green) and non-neighboring regions (red). This collection of sets does not have the internally connected property. Right: A collection of sets with the internally connected property.}
    \label{fig:neighbors}
\end{figure}

The merge distortion metric has the following useful property \cite[Th 17]{eldridge2015}.

\begin{lemma}
\label{lem:th17}
For two functions $f$ and $g$, 
\begin{equation}
\label{th17}
d_M((\cH_f, h_f), (\cH_g, h_g)) \le \|f - g\|_\infty.
\end{equation}
\end{lemma}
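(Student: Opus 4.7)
The plan is to prove the pointwise bound
\[
\bigl|m_f(x,y) - m_g(x,y)\bigr| \le \|f-g\|_\infty \quad \text{for all } x,y \in \Omega,
\]
from which the lemma follows by taking the supremum over $(x,y)$. By symmetry in $f$ and $g$, it suffices to establish the one-sided inequality $m_g(x,y) \ge m_f(x,y) - \|f-g\|_\infty$.

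To do this, I would use the reformulation \eqref{merge height hartigan} of the merge height for Hartigan trees as a supremum over all connected subsets containing the two points. Set $\epsilon = \|f-g\|_\infty$; then for every $z \in \Omega$, $g(z) \ge f(z) - \epsilon$. Fix an arbitrary connected set $C$ with $x,y \in C$. Taking infima over $z \in C$ preserves the pointwise inequality, so
\[
\inf_{z \in C} g(z) \;\ge\; \inf_{z \in C} \bigl(f(z) - \epsilon\bigr) \;=\; \inf_{z \in C} f(z) - \epsilon.
\]
The right-hand side is bounded above by $m_g(x,y) + \epsilon$ upon taking sup over the same family of connected $C$ on the left, whereas the term $\inf_{z \in C} f(z)$ can be made arbitrarily close to $m_f(x,y)$ by an appropriate choice of $C$. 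Rearranging gives $m_g(x,y) \ge m_f(x,y) - \epsilon$, and swapping $f$ and $g$ closes the argument.

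There is no serious obstacle here; the proof is essentially a two-line monotonicity argument once the merge height is expressed as a sup over connected sets. The only minor subtlety is the passage from the formal definition $m_f = \sup_{C \in \cH_f,\, x,y \in C} h_f(C)$ to the cleaner expression \eqref{merge height hartigan}, which ranges over all connected sets containing $x$ and $y$ rather than just those components of upper level sets that lie in $\cH_f$; however, this identity is already recorded in the excerpt, so it can be invoked directly. The argument does not use any regularity of $f$ or $g$ beyond measurability needed to make $\|f-g\|_\infty$ meaningful.
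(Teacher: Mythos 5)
Your argument is correct and is essentially the paper's proof: both start from the reformulation \eqref{merge height hartigan}, fix a connected set $C$ containing $x$ and $y$, transfer the pointwise bound $g \ge f - \|f-g\|_\infty$ to the infima over $C$, and conclude by symmetry. The only cosmetic difference is that the paper introduces an explicit $\eps>0$ to approximate the supremum, while you instead use $\inf_C g \le m_g(x,y)$ for every $C$ and then take the supremum over $C$ on the $f$-side; the wording of your penultimate step is slightly garbled but the underlying logic is sound.
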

\begin{proof}
The arguments in \cite{eldridge2015} are a little unclear (likely due to typos), but correct arguments are given in \cite[Lem 1]{kim2016}. We nonetheless provide a concise proof as it is instructive.
Take $x,y \in \Omega$, and let $s = m_f(x,y)$ and $t = m_g(x,y)$. We need to show that $|s-t| \le \eta := \|f-g\|_\infty$. 
For any $\eps > 0$, by \eqref{merge height hartigan}, there is a connected set $C$ containing $x$ and $y$ such that $f(z) \ge s - \eps$ for all $z \in C$. Since this implies that $g(z) \ge s - \eps - \eta$ for all $z \in C$, by \eqref{merge height hartigan} again, this yields $t \ge s - \eps - \eta$. We have thus shown that $s \le t + \eta + \eps$, and can show that $t \le s + \eta + \eps$ in exactly the same way, which combined allows us to obtain that $|s-t| \le \eta + \eps$. With $\eps >0$ arbitrary, we conclude. 
\end{proof}

The merge distortion metric has gained some popularity in subsequent works that discuss the consistency of hierarchical methods~\cite{kim2016,wang2019}. In \secref{merge} we discuss some limitations and issues with the merge distortion metric, which is in fact a pseudometric on general cluster trees. However, in the context in which we use the metric, these issues are not significant. 

We also introduce the notion of neighboring sets. 
Throughout, we adopt the convention that the empty set is disconnected.

\begin{definition}[\textbf{Neighboring regions}]
\label{def:neighbors}
Given a collection of sets $\cA = \{A_i \}$, we define the neighborhood of $A_i$ as
\begin{equation}
\label{neighbors}
\mathcal{N}(A_i) = \bigcup\big\{ A_j  :  \text{int}\big(\overline{A_i} \cup \overline{A_j}\big) \text{ is connected} \big\}.
\end{equation}
Note that $A_j \subseteq \cN(A_i) \iff A_i \subseteq \cN(A_j)$, so that we may speak of $A_i$ and $A_j$ as being neighbors, which we will denote by $A_i \sim A_j$. 
\end{definition}

Under this definition, in a Euclidean space, balls that only meet at one point are not neighbors, and neither are rectangles in dimension three that intersect only along an edge. Our discussion will be simplified in the case where we consider collections where all sets that intersect are neighbors.  

\begin{definition}[\textbf{Internally connected property}]
\label{def:neighbor_property}
Let $ \cA = \{A_i \}$ be a collection of sets. We say $\cA$ has the internally connected property if
\begin{equation}
\label{eq:pairwise_property}
\overline{A_i} \cup \overline{A_j} \
\text{ connected }\implies \text{int}\big(\overline{A_i} \cup \overline{A_j}\big) \text{ connected }.
\end{equation}
\end{definition}

\figref{neighbors} illustrates these two definitions. 

\section{Axioms}
\label{sec:axioms}
In this section, we develop a definition of the population cluster tree for a  density $f$. Inspired by previous axiomatic approaches to clustering algorithms and in the spirit of Lebesgue integration, we propose a set of axioms for a population cluster tree when the density is piecewise constant with connected support. We then extend this definition to more general densities, and arrive at a definition that is equivalent to Hartigan's tree (\defref{hartigan tree}) for continuous densities with multiple connected components, under some mild assumptions. 

\subsection{Axioms for Piecewise Constant Functions}

Previous work has discussed difficulties in defining what the ``true'' clusters are~\cite{cormack1971review, hartigan1975, hennig2015, vonluxburg2012}, observing that there may not be a single definition for all intents and purposes. 
So as to simplify the situation as much as possible so that a definition may arise as natural, we first consider piecewise constant functions with connected, bounded support. A function in that class  is of the form
\begin{equation}
\label{f constant}
f  = \sum\limits_{i=1}^{m} \lambda_i\, \bbI_{A_i},
\end{equation}  
where, for all $i$, $\lambda_i > 0$ and $A_i$ is a connected, bounded region with connected interior, and we also require that $\supp(f) =  \bigcup_{i=1}^{m} \overline{A_i}$ has connected interior. Additionally, without loss of generality, assume the $A_i$ are disjoint. Let $\cF$ denote the class of all such functions.

\begin{remark}
We require each region $A_i$ and the entire support to not only be connected, but have connected interior, and the same is true of the clusters (Axiom 1). It is well-known that the closure of a connected set is always connected, so that this is a stronger requirement, and is meant to avoid ambiguities. 
\end{remark}

For $f \in \cF$ we propose that a hierarchical clustering $\cC$ should satisfy the following three axioms. For what it's worth, Axiom 1 and Axiom 3 were put forth early on by \citet{Carmichael1968} and, most famously although not as directly, by \citet{hartigan1975}, and also correspond to the 7th item on the list of ``desirable characteristics of clusters'' suggested by \citet{hennig2015}, and Axiom 2 can be motivated by the 13th item on Hennig's list.    
\subsubsection{Axiom 1: Clusters have connected interior} 

\begin{figure}[h!]
    \centering
    \includegraphics[width=0.9\linewidth]{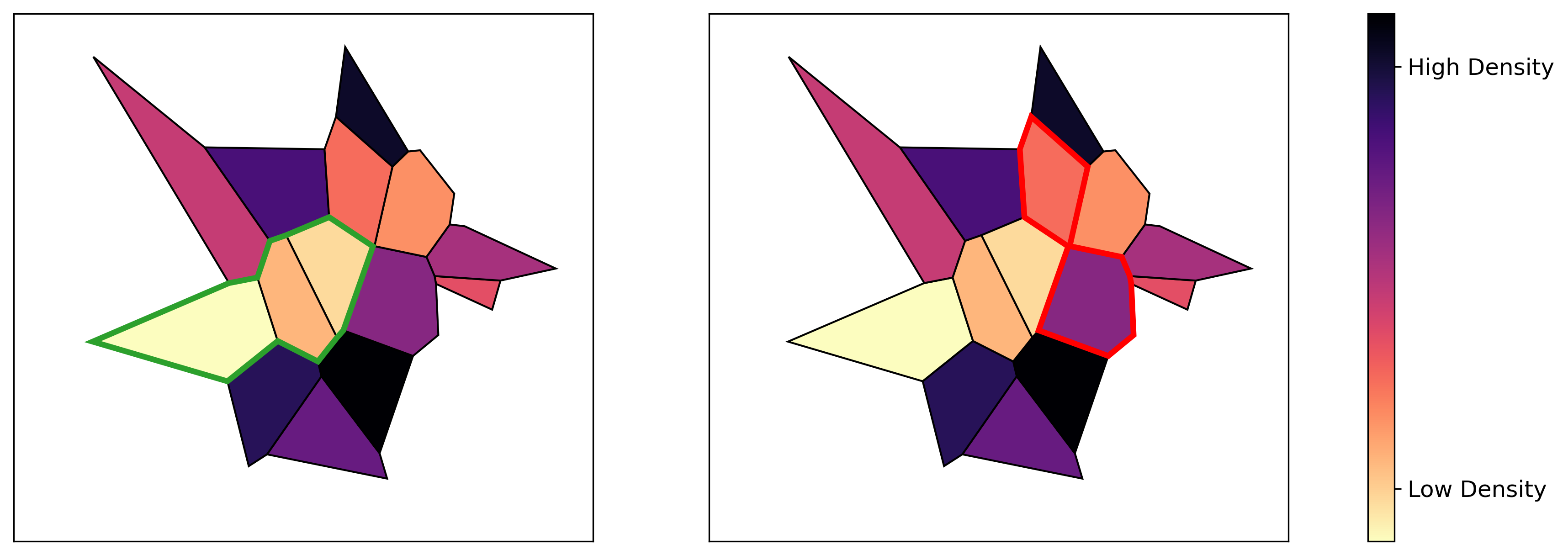}
    \caption{A piecewise constant density in $\cF$. On the left, the highlighted region may be a cluster under Axiom 1 and on the right, the highlighted region is not a cluster under Axiom 1 as the interior is not connected.}
    \label{fig:axiom1}
\end{figure}

\begin{figure}[h!]
    \centering
    \includegraphics[width=0.9\linewidth]{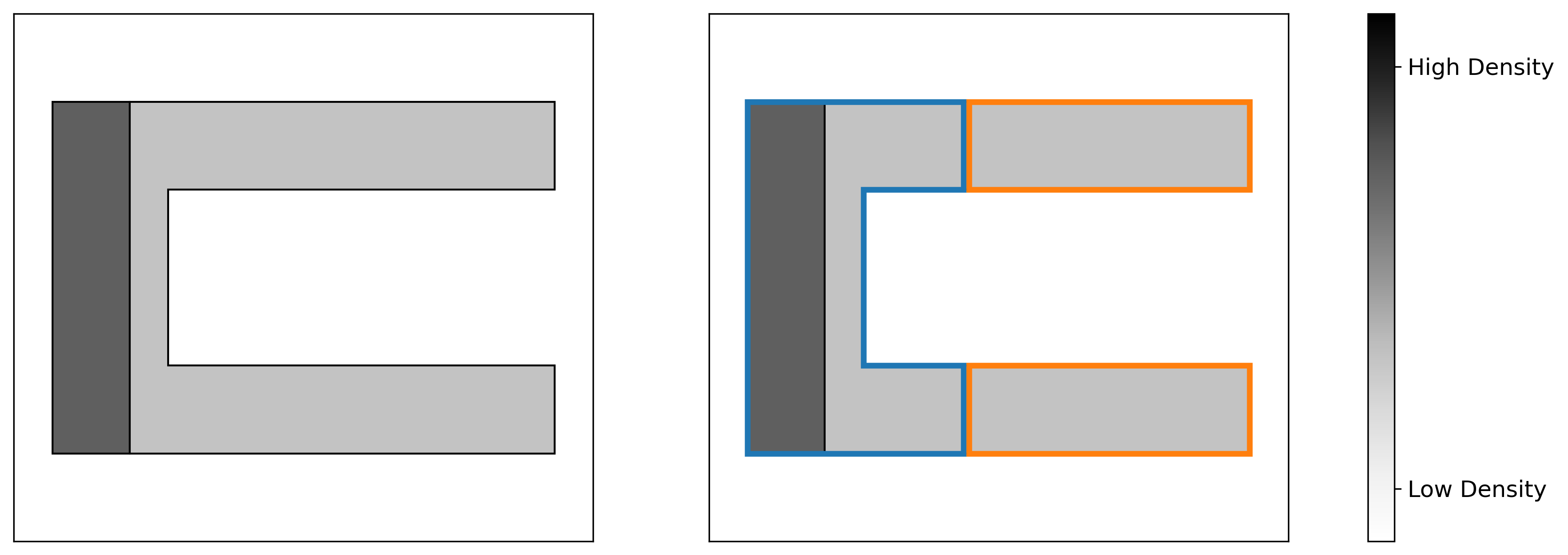}
    \caption{Left: A simple example of a piecewise constant density built on two regions. Right: The clustering output of K-means with number of clusters $K=2$. One of the clusters is disconnected.}
    \label{fig:K-means_disconnected_cluster}
\end{figure}

We propose that any cluster in $\cC$ should not only be connected, but have a connected interior. 
With Axiom 2 below in place, see \eqref{eq:a2}, we may express Axiom~1 as follows:
\begin{gather}
\tag{A1}\label{eq:a1}
\text{If  $C \in \cC$ and $A_i, A_j \subseteq C$, then there are $A_{k_1}, \dots, A_{k_n} \subseteq C$} \\ 
\text{such that $A_i \sim A_{k_1} \sim \cdots A_{k_n} \sim A_j$.}
\end{gather}

For example, for the density in \figref{axiom1}, the highlighted region in the right hand figure should not be a cluster in $\cC$, but the highlighted region in the left hand figure could be a cluster in $\cC$. This reflects the idea that elements of a cluster should in some sense be similar to each other, without imposing additional assumptions on the within-cluster distances, between-cluster distances, the relative sizes of clusters, or the shape of clusters. 

The condition that a cluster be a connected region was considered early on in the literature as it was part of the postulates put forth by \citet{Carmichael1968}. However, it is important to note that this condition is not enforced in other definitions of what a cluster is. Most prominently, K-means can return disconnected clusters --- see \figref{K-means_disconnected_cluster} for an example.

\subsubsection{Axiom 2: Clusters do not partition connected regions of constant density} 

We propose that a connected region with constant density should not be broken up into smaller clusters as this would impose an additional structure that is not present in the density. 
We may write this axiom as:
\begin{equation}
\tag{A2}\label{eq:a2}
\text{Any $C \in \cC$ is of the form $C = \bigcup\limits_{i \in I} A_i$ for some $I \subseteq \{1,2, \dots, m\}$.}
\end{equation}

\figref{axiom2} depicts and example of a valid and invalid cluster under this axiom. Note that as a consequence of this axiom, the within-cluster distances may be larger than the between-cluster distances, depending on the relative widths and separations between regions.

We find this condition to be particularly natural in the present situation where the density is piecewise constant. It is in essence already present in the concept of relatedness introduced by \citet{Carmichael1968}. But it is important to note that other definitions do not enforce this property. This is the case of K-means, which can split connected regions of constant density --- see, again, \figref{K-means_disconnected_cluster} for an example.

\begin{figure}[h!]
    \centering
    \includegraphics[width=0.9\linewidth]{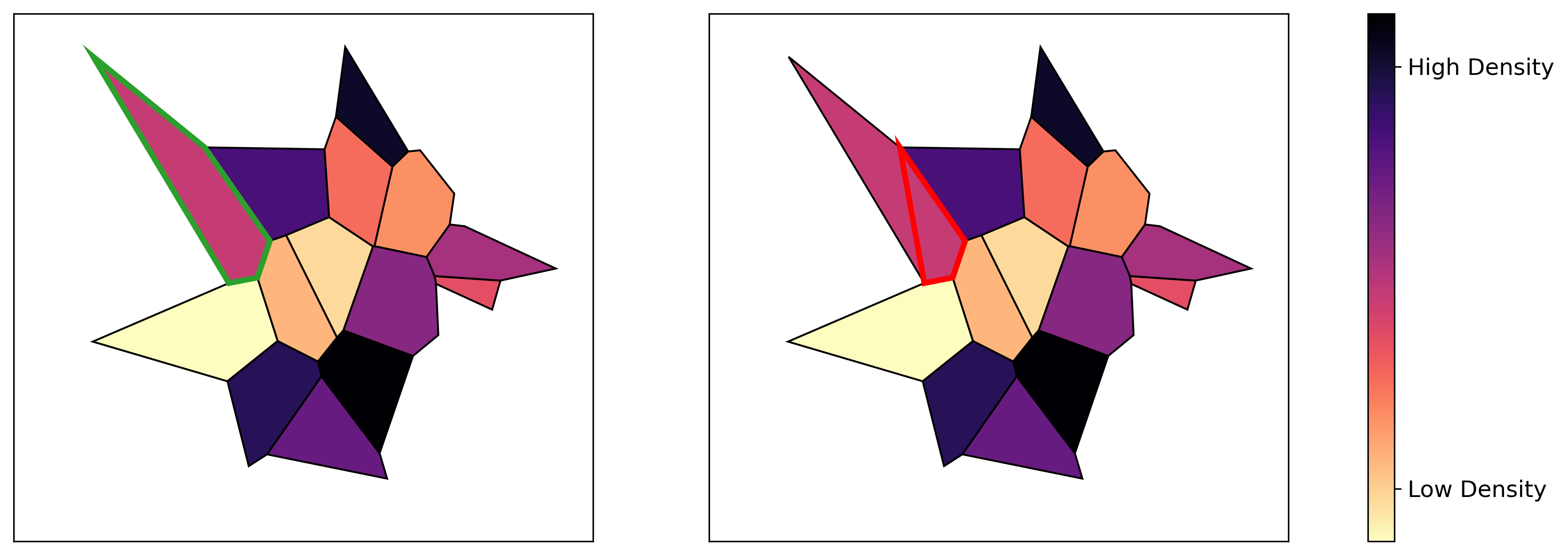}
    \caption{On the left, the highlighted region could be a cluster under Axiom 2, but the highlighted region on the right oversegments a region of constant density, and should not be a cluster. }
    \label{fig:axiom2}
\end{figure}

\begin{figure}[h!]
    \centering
    \includegraphics[width=0.9\linewidth]{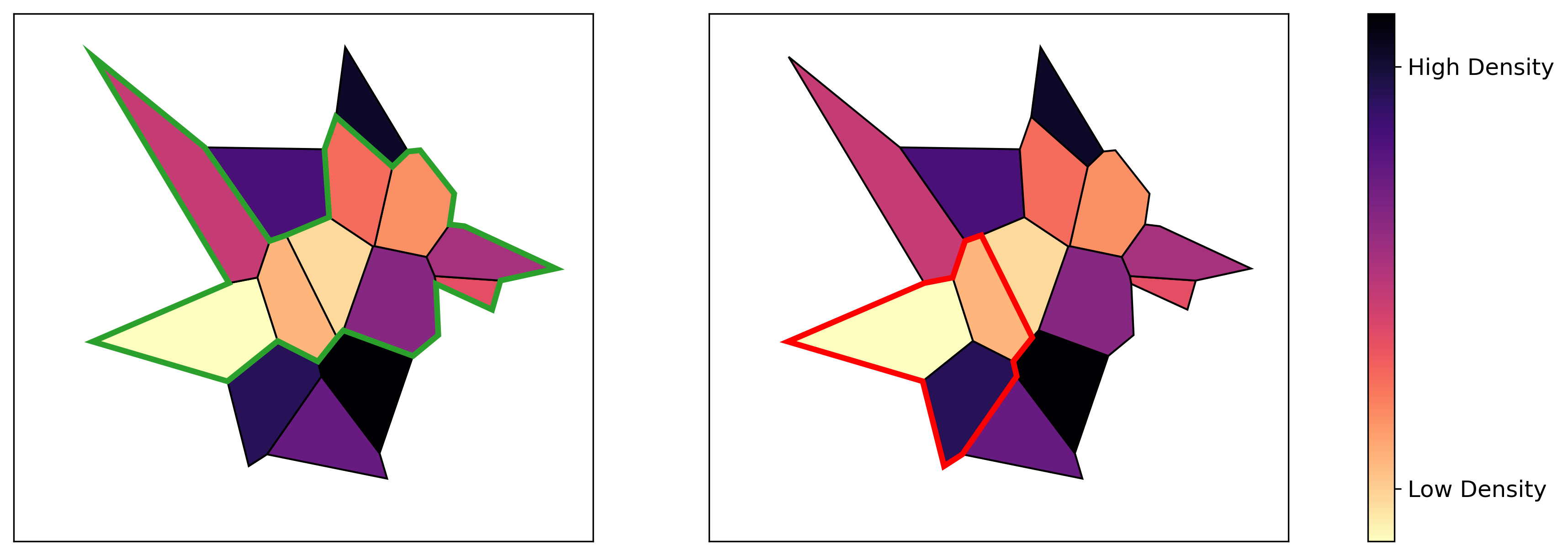}
    \caption{On the left, the lowest density in highlighted cluster exceeds the largest density in a neighboring set. On the right, the highlighted cluster contains a region with lower density than a neighbor, and thus this should not be a cluster.}
    \label{fig:axiom3}
\end{figure}

\subsubsection{Axiom 3: Clusters are surrounded by regions of lower density} 

We propose that a cluster should be surrounded by regions of lower density, meaning that:
\begin{equation}
\tag{A3}\label{eq:a3}
\text{For any $C \in \cC$, it holds that $\inf\limits_{x \in C } f(x) > \!\!\!\!\!\sup_{x \in \cN(C) \setminus C} f(x)$,} 
\end{equation}
where, if $C = \bigcup_{i \in I} A_i$, then $\cN(C) = \bigcup_{i \in I} \cN(A_i)$ denotes the  neighbor of $C$, extending the definition given in \eqref{neighbors}. \figref{axiom3} includes an example. 

This is one of the postulates of  \citet{Carmichael1968}, although it was perhaps most popularized by Hartigan in his well-known book \cite[Ch 11]{hartigan1975}. Although it is not part of most other approaches to clustering --- K-means being among those as \figref{K-means_disconnected_cluster} shows --- we find that this condition is rather compatible with the colloquial understanding of `points clustering together'.

\subsection{Finest hierarchical clustering}

\begin{definition}[\textbf{Finer cluster tree}]
We say that a cluster tree $\cC$ is finer than (or a refinement of) another cluster tree $\cC'$ if $\cC$ includes all the clusters of $\cC'$, namely, $C \in \cC' \implies C \in \cC$.
\end{definition}

As it turns out, given a nonnegative function, there is one, and only one, finest cluster tree among those satisfying the axioms above.

\begin{proposition}
\label{prp:axiom_uniqueness}
For any $f \in \cF$, there exists a unique finest hierarchical clustering of $f$ among those satisfying the axioms.
\end{proposition}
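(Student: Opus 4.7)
The plan is to construct the finest admissible clustering directly as the union over all admissible ones, and then check closure. Let $\cT$ be the collection of all hierarchical clusterings of $f$ satisfying Axioms 1--3, and define
\[
\cC^\star = \bigcup_{\cC \in \cT} \cC.
\]
Three things need to be verified: (i) $\cC^\star$ is a hierarchical clustering of $f$, i.e.\ its members are pairwise either disjoint or nested; (ii) $\cC^\star$ satisfies Axioms~1--3; and (iii) $\cC^\star$ is finer than every $\cC \in \cT$. Uniqueness then is immediate, since two finest cluster trees must each contain the other.

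Parts (ii) and (iii) are quick. Each axiom is a local property of a single cluster together with its neighborhood (Axiom~2 asks that $C$ be a union of the $A_i$, Axiom~1 that the $A_i$ making up $C$ form a neighbor-chain within $C$, and Axiom~3 that $\inf_C f$ exceed $\sup_{\cN(C)\setminus C} f$), so every member of $\cC^\star$ inherits these properties from whichever $\cC \in \cT$ it came from. Part (iii) is tautological: $\cC^\star$ contains each $\cC \in \cT$ by construction, so $\cC^\star$ is the unique finest element of $\cT$ provided (i) holds.

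The content of the proposition lies in (i). Given $C \in \cC$ and $C' \in \cC'$ with $\cC, \cC' \in \cT$, I will argue by contradiction that $C \cap C' \neq \emptyset$ forces $C \subseteq C'$ or $C' \subseteq C$. By Axiom~2 write $C = \bigcup_{i \in I} A_i$ and $C' = \bigcup_{j \in I'} A_j$. If $C \cap C' \neq \emptyset$ but neither is contained in the other, then we can pick $i_0 \in I \cap I'$, $i_1 \in I \setminus I'$, and $i_2 \in I' \setminus I$. Applying Axiom~1 inside $C$ produces a chain of neighbors $A_{i_0} \sim A_{k_1} \sim \cdots \sim A_{k_n} \sim A_{i_1}$ with every $A_{k_\ell} \subseteq C$. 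Because the chain starts inside $C'$ (at $A_{i_0}$) and ends outside $C'$ (at $A_{i_1}$), there is a consecutive pair $A_{k_\ell} \subseteq C'$ with $A_{k_{\ell+1}} \not\subseteq C'$. Then $A_{k_{\ell+1}} \subseteq \cN(C') \setminus C'$ while simultaneously $A_{k_{\ell+1}} \subseteq C$, so Axiom~3 applied to $C'$ gives $\inf_{C'} f > f(A_{k_{\ell+1}}) \ge \inf_C f$. Running the symmetric argument with a neighbor-chain inside $C'$ from $A_{i_0}$ to $A_{i_2}$ yields $\inf_C f > \inf_{C'} f$, a contradiction.

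The main obstacle is precisely this crossing argument for (i); it is the only place where the three axioms interact. Axiom~2 reduces the problem to a combinatorial statement about index sets, Axiom~1 supplies neighbor-chains that must cross the symmetric difference, and Axiom~3 converts such a crossing into a strict density inequality that becomes contradictory under symmetry. Everything else, including Axiom verification for $\cC^\star$ and the uniqueness of the finest refinement, follows from the construction.
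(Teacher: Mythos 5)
Your proof is correct and follows essentially the same route as the paper's: both construct the candidate tree as the collection of every cluster satisfying the axioms (your union over admissible trees is the same set, since any admissible singleton $\{C\}$ is itself an admissible tree), and both derive the nested-or-disjoint property via a neighbor-chain crossing out of one cluster, applying Axiom 3 to get a strict inequality, and then invoking symmetry for the contradiction. If anything you are slightly more explicit than the paper in noting that non-nestedness supplies both $i_1 \in I\setminus I'$ and $i_2 \in I'\setminus I$, which is what justifies running the argument in both directions.
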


\begin{proof} 
Let $f$ be as in \eqref{f constant}.
The proof is by construction. 
Let $\cC_*$ denote the collection of every cluster that satisfies \eqref{eq:a1}, \eqref{eq:a2}, and \eqref{eq:a3}.
Clearly, it suffices to show that $\cC_*$ is a hierarchical clustering (\defref{cluster tree}). 
Take two clusters in $\cC_*$, say $C_1 = \bigcup_{i \in I_1} A_i$ and $C_2 = \bigcup_{i \in I_2} A_i$.  We need to show that $C_1$ and $C_2$ are either disjoint or nested.
Suppose for contradiction that this is not the case, so that $C_1$ and $C_2$ are neither disjoint nor nested. Since they are not disjoint, there is $i \in I_1 \cap I_2$, so that $A_{i} \subseteq C_1 \cap C_2$. And since they are disjoint, there is $j \in I_1 \setminus I_2$, so that $A_{j} \subseteq C_1 \setminus C_2$.
By \eqref{eq:a1}, there are $i_1, \dots, i_s \in I_1$ such that $A_i \sim A_{i_1} \sim \cdots \sim A_{i_s} \sim A_{j}$. Let $t = \max\{q : A_{i_q} \subseteq C_2\}$, so that $A_{i_t} \subseteq C_2$ while while $A_{i_{t+1}} \nsubseteq C_2$, and in particular $A_{i_{t+1}} \subseteq \cN(C_2) \setminus C_2$, and applying \eqref{eq:a3}, we get

\[
\min_{C_2} f
> \max_{\cN(C_2) \setminus C_2} f
\ge \lambda_{i_{t+1}}
\ge \min_{C_1} f,\]
using at the end the fact that $A_{i_{t+1}} \subseteq C_1$.
However, we could also get the reverse inequality, $\min_{C_1} f > \min_{C_2} f$, in the same exact way, which would result in a contradiction. 
\end{proof}

\prpref{axiom_uniqueness} justifies the following definition.

\begin{definition}[\textbf{Finest axiom cluster tree}]
\label{def:axiomatic_cluster_tree}
For $f \in \cF$, we denote by $\cC_f^{*}$ the finest cluster tree of $f$ among those satisfying the axioms. 
\end{definition}

\subsection{Comparison with Hartigan's Cluster Tree}

It is natural to compare the finest axiom cluster tree of \defref{axiomatic_cluster_tree} with the Hartigan cluster tree of \defref{hartigan tree}. 
First, observe that for  $f \in \cF$, $\cH_f$ satisfies \eqref{eq:a2} and \eqref{eq:a3}. However, $\cH_f$ need not satisfy \eqref{eq:a1}, as clusters in $\cH_f$ are only required to be connected. As a result, in general, the Hartigan tree $\cH_f$ is not the same as the finest axiom cluster tree $\cC^*_f$.  A counter example is given in~\figref{Hartigan}.

We define $\cF_{\rm int}$ as the class of functions in $\cF$ with $\{ A_i \}$ in \eqref{f constant} having the internally connected property (\defref{neighbor_property}). 

\begin{theorem}
\label{thm:hartigan_equivalence_pw}
For any $f \in \cF_{\rm int}$, it holds that $\cC^*_f = \cH_f$. 
\end{theorem}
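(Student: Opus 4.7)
I would prove both inclusions $\cH_f \subseteq \cC^*_f$ and $\cC^*_f \subseteq \cH_f$. The bridge between the two cluster trees is the following observation: for $f \in \cF_{\rm int}$, the connected components of an upper level set $\{f \ge \lambda\}$ correspond to the connected components of the neighbor graph restricted to $\{i : \lambda_i \ge \lambda\}$, because under the internally connected hypothesis, $A_i \sim A_j$ is equivalent to $\overline{A_i} \cap \overline{A_j} \ne \emptyset$ (each $\overline{A_k}$ being connected, two such closed sets union connectedly iff they touch). This is the key topological lemma I would establish first; its nontrivial direction is contrapositive, separating any index set $I = I_1 \sqcup I_2$ with no cross-neighbors into $\overline{\bigcup_{i \in I_1} A_i}$ and $\overline{\bigcup_{i \in I_2} A_i}$, which by the simplified neighbor characterization must be disjoint, yielding a separation of $\bigcup_{i \in I} A_i$.

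For $\cH_f \subseteq \cC^*_f$: given $C \in \cH_f$, a connected component of $\{f \ge \lambda\}$ for some $\lambda > 0$, I would write $C = \bigcup_{i \in I} A_i$ with $I = \{i : A_i \subseteq C\}$ neighbor-connected by the bridge lemma. Then \eqref{eq:a2} is immediate, \eqref{eq:a1} follows from neighbor-connectedness of $I$, and \eqref{eq:a3} holds because any $A_j \in \cN(C) \setminus C$ with $\lambda_j \ge \lambda$ would be neighbor-adjacent to $I$ within $\{k : \lambda_k \ge \lambda\}$, forcing $A_j \subseteq C$, a contradiction.

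For $\cC^*_f \subseteq \cH_f$: given $C = \bigcup_{i \in I} A_i \in \cC^*_f$, set $\lambda = \inf_C f = \min_{i \in I} \lambda_i$ and let $\tilde C = \bigcup_{j \in \tilde I} A_j$ (with $I \subseteq \tilde I$) be the connected component of $\{f \ge \lambda\}$ containing $C$. If $I \subsetneq \tilde I$, the bridge lemma applied to $\tilde I$ exhibits a neighbor edge $A_i \sim A_k$ with $i \in I$ and $k \in \tilde I \setminus I$, yielding $A_k \in \cN(C) \setminus C$ with $\lambda_k \ge \lambda = \inf_C f$, contradicting \eqref{eq:a3}. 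Hence $I = \tilde I$ and $C = \tilde C \in \cH_f$.

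The main obstacle is the bridge lemma identifying topological connectedness of $\bigcup_{i \in I} A_i$ with connectedness of the neighbor graph on $I$; this is precisely where the internally connected hypothesis earns its keep, and without it the two notions genuinely diverge (as the counterexample preceding this theorem illustrates), so that Hartigan clusters may fail \eqref{eq:a1}. Once the bridge is in place, both inclusions follow from the same neighbor-chaining template used in the proof of \prpref{axiom_uniqueness}.
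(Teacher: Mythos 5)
Your proposal is essentially the paper's own argument: both show $\cH_f \subseteq \cC^*_f$ by checking that each Hartigan cluster satisfies \eqref{eq:a1}--\eqref{eq:a3} and invoking the fact that $\cC^*_f$ is the finest tree satisfying the axioms (\prpref{axiom_uniqueness}), and both show $\cC^*_f \subseteq \cH_f$ by placing a cluster $C$ inside the Hartigan component $M$ of $\up_{h_f(C)}$ and extracting from the connectedness of $M$ a boundary pair $A_i \subseteq C$, $A_j \subseteq M\setminus C$ with $A_i\sim A_j$, which contradicts \eqref{eq:a3}. Your explicit ``bridge lemma'' --- the equivalence, under the internally connected hypothesis, of $A_i\sim A_j$ with $\overline{A_i}\cap\overline{A_j}\ne\emptyset$, and hence of topological connectedness of $\bigcup_{i\in I}A_i$ with connectedness of the neighbor graph on $I$ --- simply packages what the paper uses in-line when it passes from ``$A_i\cup A_j$ connected'' to ``$A_i\sim A_j$'' via $f\in\cF_{\rm int}$.
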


\begin{proof}
First, observe that under our assumption, $\cH_f$ satisfies all axioms \eqref{eq:a1}, \eqref{eq:a2}, and \eqref{eq:a3}. Thus, because $\cC^*_f$ is the finest cluster tree among those satisfying the axioms (\prpref{axiom_uniqueness}), it must be the case that $\cH_f \subseteq \cC^*_f$.

For the reverse inclusion, take any $C \in \cC^*_f$. We want to show that $C \in \cH_f$.
Recalling the definition of $h_f$ in \eqref{height}, define $\lambda = h_f(C)$ and let $M$ be the maximally connected subset of $\{f \geq \lambda \}$ that contains $C$. We need to show that $C = M$. 
Noting that $C$ is of the form $\bigcup_{i \in I} A_i$ because of \eqref{eq:a2}, and that $M$ must be of the form $\bigcup_{j \in J} A_j$ because $f$ is of the form \eqref{f constant}, and that $M$ contains $C$ by definition, it is the case that $I \subseteq J$.

Suppose for contradiction that $C \ne M$, so that $I \ne J$. Since $M$ is connected, there must be $A_i$ in $C$ and $A_j$ in $M \setminus C$ such that $A_i \cup A_j$ is connected. As is well-known, this implies that $\overline{A_i \cup A_j} = \overline{A_i} \cup \overline{A_j}$ is connected, and since $f \in \cF_{\rm int}$, $\text{int}(\overline{A_j} \cup \overline{A_i})$ is also connected, in turn implying that $A_i \sim A_j$. Applying \eqref{eq:a3}, we get that $\lambda > f(A_j)$, and this is a contradiction since $A_j \subseteq M$ and $M$ is part of the upper $\lambda$-level set. 
\end{proof}

\begin{figure}[h!]
    \centering
    \includegraphics[width=0.5\linewidth]{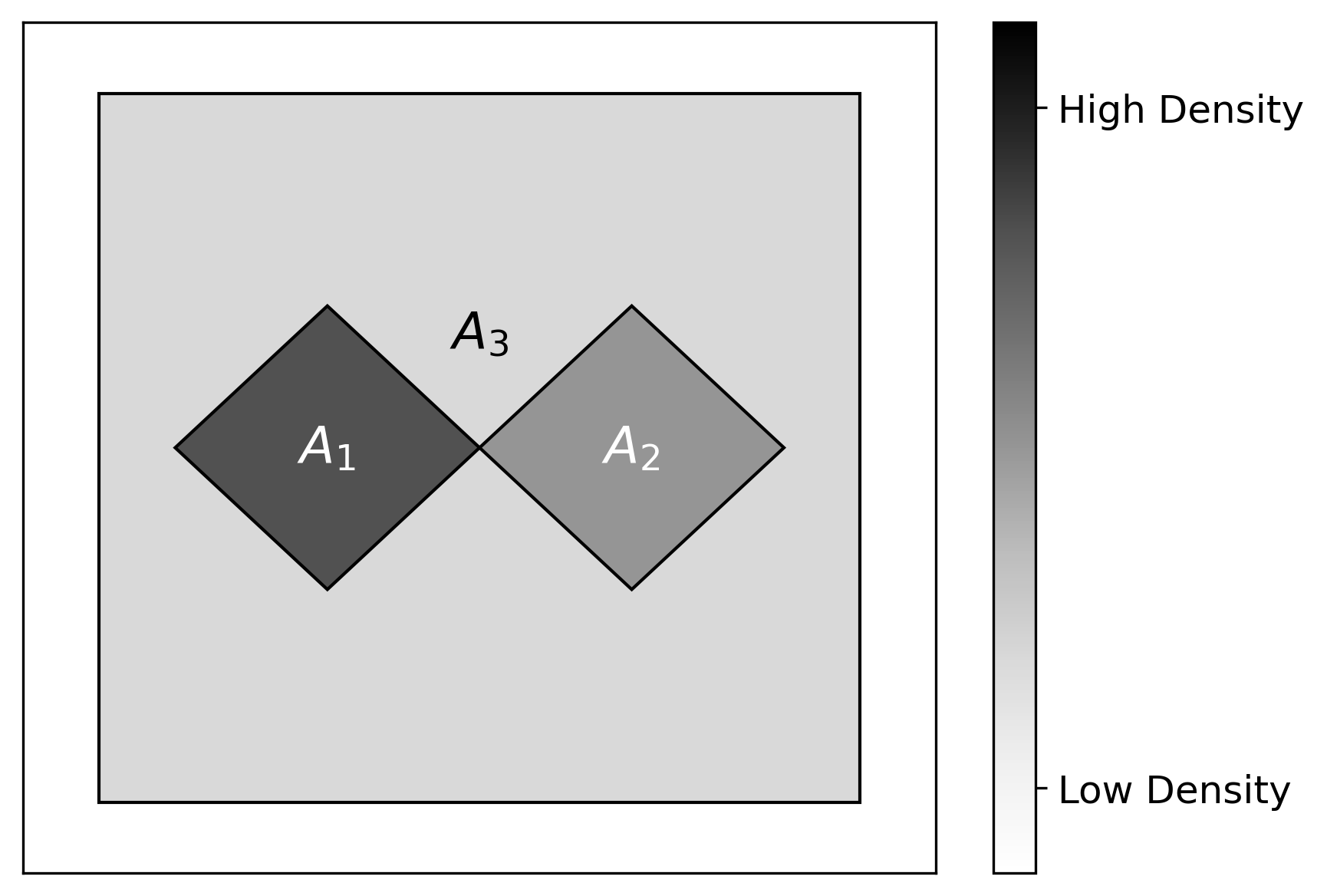}
    \caption{An example where Hartigan's cluster tree does not satisfy the axioms, so that $\cC^*_f \neq \cH_f$. Indeed, $\cH_f = \{ A_1, A_1 \cup A_2, A_1 \cup A_2 \cup A_3\}$ but $A_1 \cup A_2 \notin \cC^*_f$ because $\text{int}(A_1 \cup A_2)$ is not connected. Instead, we have $\cC^*_f = \{ A_1, A_2, A_1 \cup A_2 \cup A_3\}$.}
    \label{fig:Hartigan}
\end{figure}

\begin{remark}
\label{rem:relax}
As a relaxation of Axiom 1, we could simply require a cluster to be connected, and allow it to have disconnected interior. If the definition of a neighboring region were also relaxed so that if the closure of the union of two sets is connected, then the sets are considered neighbors, then the relaxed Axiom 1, original Axiom 2, and original Axiom 3 would yield an axiomatic definition of a cluster tree that is identical to the Hartigan tree for $f \in \cF$. All that said, we find the requirement that the interior be connected in our original Axiom 1 (and in \defref{neighbors}) to be more natural and robust.
\end{remark}

\section{Extension to Continuous Functions}
\label{sec:continuous}
Having defined the finest axiom cluster tree for a piecewise constant function (\defref{axiomatic_cluster_tree}), we now examine its implication when piecewise constant functions are used to approximate continuous functions. More specifically, we consider sequences of piecewise constant functions in $\cF_{\rm int}$ converging to a continuous function, and show that, under some conditions, the corresponding finest axiom cluster trees converge to the Hartigan cluster tree of the limit function in merge distortion metric (\defref{merge metric}).

\subsection{Functions with connected support}
\label{sec:continuous_connected}

We start with continuous functions whose support has connected interior. 

\begin{definition}
\label{def:finest cluster tree continuous}
Given a continuous function $f$ with connected support, we say that $\cC$ is an axiom cluster tree for $f$ if there is a sequence $(f_n) \subseteq \cF_{\rm int}$ that uniformly approximates $f$ such that  
\begin{equation}
\label{eq:continuous_extension}
\lim\limits_{n \rightarrow \infty} d_M((\cC^*_{f_n}, h_{f_n}), (\cC, h_f)) = 0.
\end{equation}
\end{definition}

At this point it is not clear whether a continuous function admits an axiom cluster tree. However, if it does, then its Hartigan cluster tree is one of them and, moreover, all other axiom cluster trees are zero merge distortion distance away.

\begin{theorem}
\label{thm:hartigan_equivalence}
Suppose $f$ is a continuous function that admits an axiom cluster tree. Then its Hartigan tree $\cH_f$ is an axiom cluster tree for $f$. Moreover, if $\cC$ is an axiom cluster tree for $f$, then $d_M((\cC, h_f), (\cH_f, h_f)) = 0$.
\end{theorem}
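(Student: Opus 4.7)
The plan is to leverage two tools already established in the paper: \thmref{hartigan_equivalence_pw}, which gives $\cC^*_{f_n} = \cH_{f_n}$ for every $f_n \in \cF_{\rm int}$, and \lemref{th17}, which provides $\|\cdot\|_\infty$--Lipschitz stability of the Hartigan dendrogram in the merge distortion metric. Once both are in hand, the theorem reduces to a direct triangle-inequality argument.

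\textbf{First assertion.} Because $f$ admits an axiom cluster tree, \defref{finest cluster tree continuous} furnishes at least one sequence $(f_n) \subseteq \cF_{\rm int}$ with $f_n \to f$ uniformly. I would apply \thmref{hartigan_equivalence_pw} to identify $(\cC^*_{f_n}, h_{f_n}) = (\cH_{f_n}, h_{f_n})$ for every $n$, and then \lemref{th17} to conclude
$$
d_M\big((\cC^*_{f_n}, h_{f_n}), (\cH_f, h_f)\big) = d_M\big((\cH_{f_n}, h_{f_n}), (\cH_f, h_f)\big) \le \|f_n - f\|_\infty \longrightarrow 0.
$$
This is precisely the convergence required in \defref{finest cluster tree continuous}, so $\cH_f$ is itself an axiom cluster tree for $f$.

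\textbf{Second assertion.} Let $\cC$ be any axiom cluster tree for $f$, and fix an approximating sequence $(g_n) \subseteq \cF_{\rm int}$ with $d_M((\cC^*_{g_n}, h_{g_n}), (\cC, h_f)) \to 0$. Repeating the argument above on $(g_n)$ yields $d_M((\cC^*_{g_n}, h_{g_n}), (\cH_f, h_f)) \to 0$. Since $d_M$ satisfies the triangle inequality (it is a pseudometric), I would combine the two bounds via
$$
d_M\big((\cC, h_f), (\cH_f, h_f)\big) \le d_M\big((\cC, h_f), (\cC^*_{g_n}, h_{g_n})\big) + d_M\big((\cC^*_{g_n}, h_{g_n}), (\cH_f, h_f)\big)
$$
and let $n \to \infty$ to obtain $d_M((\cC, h_f), (\cH_f, h_f)) = 0$.

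Given the two preparatory results, the argument is essentially mechanical and I do not anticipate a real obstacle. The one point worth checking carefully is that the heights appearing in \defref{finest cluster tree continuous} are exactly $h_{f_n}$ and $h_f$ in the sense of \eqref{height} applied to each respective function, which is precisely the form in which \lemref{th17} applies without any adjustment. This matching of heights, rather than any abstract height function on the limiting tree, is what makes the Lipschitz stability bound slot in directly.
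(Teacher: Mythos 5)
Your proposal is correct and uses exactly the same ingredients as the paper's proof: the identification $\cC^*_{f_n} = \cH_{f_n}$ from \thmref{hartigan_equivalence_pw}, the $\|\cdot\|_\infty$ stability bound of \lemref{th17}, and the triangle inequality for $d_M$. The only cosmetic difference is that you establish the two assertions in two passes with potentially different witnessing sequences, whereas the paper runs a single argument starting from an arbitrary axiom cluster tree $\cC$ and observes in passing that the estimate in the second summand already shows $\cH_f$ is axiomatic; the mathematics is the same.
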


\begin{proof}
Let $\cC$ be an axiom cluster tree for $f$. 
By \defref{finest cluster tree continuous}, there is a sequence $(f_n)$ in $\cF_{\rm int}$ that converges uniformly to $f$ such that \eqref{eq:continuous_extension} holds. 
By the triangle inequality,
\begin{align}
d_M((\cC, h_f), (\cH_f, h_f))
&\le d_M((\cC, h_f), (\cC^*_{f_n}, h_{f_n})) + d_M((\cC^*_{f_n}, h_{f_n}), (\cH_{f}, h_f)).
\end{align}
We already know that the first term on the RHS tends to zero.
For the second term, using \thmref{hartigan_equivalence_pw} and \lemref{th17}, 
\begin{align}
\label{hartigan_equivalence_proof1}
d_M((\cC^*_{f_n}, h_{f_n}), (\cH_{f}, h_f))
&= d_M((\cH_{f_n}, h_{f_n}), (\cH_{f}, h_f)) 
\leq  \|f_n - f\|_{\infty}\rightarrow 0, \quad n \to \infty.
\end{align}
We thus have that $d_M((\cC, h_f), (\cH_f, h_f)) = 0$ --- this being true for any axiom cluster tree $\cC$. In the process, we have also shown in \eqref{hartigan_equivalence_proof1} that $\cH_f$ is axiomatic.
\end{proof}

The remaining of this subsection is dedicated to providing sufficient conditions on a function $f$ for the existence of sequence $(f_n) \subseteq \cF_{\rm int}$ that converges uniformly to $f$. In formalizing this, we will utilize the following terminology and results. 

\begin{definition}[\textbf{Internally connected partition property}] 
We say that $\Omega$ has the internally connected partition property if it is connected, and for any $r > 0$, there exists a locally finite partition $\{A_i\}$ of $\Omega$ that has the internally connected property and is such that, for all $i$, $A_i$ is connected with connected interior and diameter at most~$r$.
\end{definition}

We establish in \prpref{euclidean} that any Euclidean space (and, consequently, of any finite-dimensional normed space) has the internally connected partition property. And we conjecture that this extends to some Riemannian manifolds. 

\begin{proposition}
\label{prp:uniform_approx_conditions}
Suppose $(\Omega, d)$ is a metric space where all closed and bounded sets are compact\footnote{This is sometimes called the Heine--Borel property.}, and that has the internally connected partition property. Let $f: \Omega \rightarrow [0, \infty)$ be continuous with all upper level sets bounded, and such that the upper $\lambda$-level set is connected when $\lambda > 0$ is small enough.
Then, there is a  sequence $(f_n) \in \cF_{\rm int}$ that converges uniformly to $f$.  
\end{proposition}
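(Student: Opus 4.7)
The plan is to construct, for each $\varepsilon > 0$, a piecewise constant function $f_\varepsilon \in \cF_{\rm int}$ with $\|f_\varepsilon - f\|_\infty \le \varepsilon$, and then take $\varepsilon = 1/n$ to produce the required sequence.

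First I would choose a threshold $\lambda = \lambda(\varepsilon) > 0$ with $\lambda \le \varepsilon$ and small enough that $K := \{f \ge \lambda\}$ is connected, which is possible by hypothesis. The set $K$ is closed (continuity of $f$), bounded (hypothesis), and therefore compact (Heine--Borel). Let $K'$ be the closed $1$-neighborhood of $K$; it is closed and bounded, hence compact, and $f$ is uniformly continuous on $K'$. Choose $r \in (0, 1/2)$ small enough that the oscillation of $f$ on any subset of $K'$ of diameter $\le r$ is at most $\varepsilon$; note that any set of diameter $\le r$ meeting $K$ lies inside $K'$. Using the internally connected partition property, take a locally finite partition $\{A_i\}$ of $\Omega$ that has the internally connected property, with each $A_i$ connected with connected interior and diameter at most $r$.

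Let $I = \{i : A_i \cap K \ne \emptyset\}$, which is finite because $K$ is compact and the partition is locally finite. For each $i \in I$ pick any $x_i \in A_i \cap K$ and set $\mu_i = f(x_i) \ge \lambda > 0$; then define $f_\varepsilon := \sum_{i \in I} \mu_i \bbI_{A_i}$. The uniform bound $\|f_\varepsilon - f\|_\infty \le \varepsilon$ splits cleanly into two cases: for $x \in A_i$ with $i \in I$, uniform continuity on $K'$ gives $|f_\varepsilon(x) - f(x)| = |f(x_i) - f(x)| \le \varepsilon$; for $x$ outside $\bigcup_{i \in I} A_i$, the point lies in some $A_j$ with $j \notin I$, so $f(x) < \lambda \le \varepsilon$ while $f_\varepsilon(x) = 0$.

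To verify $f_\varepsilon \in \cF_{\rm int}$, most requirements are immediate: the $A_i$ in $I$ are pairwise disjoint (partition), each is connected and bounded with connected interior, the subcollection $\{A_i\}_{i \in I}$ inherits the internally connected property, and the coefficients $\mu_i$ are positive. The crux is showing that $\supp(f_\varepsilon) = \bigcup_{i \in I} \overline{A_i}$ has connected interior. I would introduce the adjacency graph on $I$ with edges $i \sim j$ whenever $\overline{A_i} \cap \overline{A_j} \ne \emptyset$. Since every $\overline{A_i}$ for $i \in I$ meets the connected set $K$, the union $\bigcup_{i \in I} \overline{A_i}$ is connected, which forces the adjacency graph to be connected. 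For every edge $i \sim j$, the internally connected property makes $\text{int}(\overline{A_i} \cup \overline{A_j})$ a connected open subset of $\text{int}(\supp(f_\varepsilon))$ containing both $A_i^\circ$ and $A_j^\circ$. Chaining these along a spanning tree of the adjacency graph (consecutive pairs share the nonempty open set $A_k^\circ$ for the intermediate index $k$) exhibits $\bigcup_{i \in I} A_i^\circ$ as lying in a single connected open subset $V \subseteq \text{int}(\supp(f_\varepsilon))$. Finally, since $\Omega$ is locally connected and locally compact (hence Baire), a Baire-category argument shows that any nonempty open subset of $\supp(f_\varepsilon)$ contains an open subset of some $\text{int}(\overline{A_i})$ and hence meets $V$; combined with the fact that connected components of the open set $\text{int}(\supp(f_\varepsilon))$ are open, this forces $V = \text{int}(\supp(f_\varepsilon))$.

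The main obstacle is precisely that last step -- lifting the pairwise internally connected property to a statement about the finite union. The chain-and-Baire argument is what does the job, combining the hypothesis that the low-level sets of $f$ are connected (to make the adjacency graph connected) with the internally connected property of the partition (to carry connectedness through to the interior).
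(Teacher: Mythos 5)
Your construction mirrors the paper's proof step for step: fix a small threshold $\lambda$ so that $K=\{f\ge\lambda\}$ is connected, pass to the compact $1$-neighborhood $K'$ to get uniform continuity, take a fine partition with the internally connected property, keep the cells meeting $K$, and define a piecewise constant function on those cells. Your choice of constants $\mu_i=f(x_i)$ with $x_i\in A_i\cap K$ is a harmless variant of the paper's $\lambda_i=\sup_{A_i}f$; both give the supnorm bound and both are $\ge\lambda>0$. Your proof that $\bigcup_{i\in I}\overline{A_i}$ is connected (each $\overline{A_i}$ meets the connected set $K\subseteq\bigcup_{i\in I}A_i$) is an equally clean alternative to the paper's contradiction argument.

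Where you diverge is in trying to prove, rather than assert, that $\text{int}\bigl(\bigcup_{i\in I}\overline{A_i}\bigr)$ is connected. The paper is terse here --- it establishes connectedness of $\bigcup_{i\in I}\overline{A_i}$ and declares this ``all we need to check,'' implicitly leaning on the internally connected property to pass to the interior --- so your attempt to make this explicit is a genuine contribution. Your chain argument along the adjacency graph is correct and shows that $\bigcup_{i\in I}A_i^\circ$, together with all the linking open sets $\text{int}(\overline{A_i}\cup\overline{A_j})$, sits inside a single component $V$ of $\text{int}(\supp f_\varepsilon)$. However, the final Baire step has a subtle leak: Baire gives you a nonempty open $W\subseteq\overline{A_i}\cap V_x$, hence $W\subseteq\text{int}(\overline{A_i})$, but $\text{int}(\overline{A_i})$ need not coincide with $A_i^\circ$ and need not be connected, so ``$W\subseteq\text{int}(\overline{A_i})$ hence $W$ meets $V$'' does not follow without an extra hypothesis such as $\text{int}(\overline{A_i})=A_i^\circ$ (or $A_i\subseteq\overline{A_i^\circ}$). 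That regularity holds for the Euclidean shifted grid in the appendix, and is almost certainly what the authors have in mind, but it is not part of the stated internally connected partition property. So: same strategy, a more conscientious treatment of the crux, and a residual gap at exactly the point where the paper itself is least explicit.
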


\begin{proof} 
It is enough to show that, for any $\eta > 0$, there is a function in $\cF_{\rm int}$ within $\eta$ of $f$ in supnorm. 
Therefore, fix $\eta > 0$, and take it small enough that the upper $\eta$-level set, $K = \{x : f(x) \geq \eta\}$, is connected.
Consider 
\begin{equation}
\label{tube}
K_1 = \big\{y : d(y, x) \le 1, \text{ for some } x \in K\big\}.
\end{equation}
In particular, $K_1$ is compact, and since $f$ is continuous on $K_1$, it is uniformly so, and therefore there exists $0 < \eps < 1$ such that, if $x,y \in K_1$ are such that $d(x,y) \le \eps$, then $|f(x) - f(y)| \le \eta$. 

By the fact that $\Omega$ has the internally connected partition property, it admits a locally finite partition $\{A_i\}$ with the internally connected property and such that, for all $i$, $A_i$ has connected interior and diameter at most~$\eps$.
Let 
\[I = \{i : A_i \cap K \neq \emptyset\},\]
and note that $I$ is finite and that $K \subseteq \bigcup_{i \in I} A_i \subseteq K_1$.  
For $i \in I$, let $\lambda_i = \sup_{x \in A_i} f(x)$. Because $A_i \cap K \ne \emptyset$, we have $\lambda_i \geq \eta$. 
Finally, we define the piecewise constant function $g = \sum_{i\in I} \lambda_i \bbI_{A_i}$.
We claim that $g \in \cF_{\rm int}$.
Since $\{A_i : i \in I\}$ inherits the internally connected property from $\{A_i\}$, all we need to check is that $\bigcup_{i \in I} \overline{A_i}$ is connected. To see this, first note that it is enough that $\bigcup_{i \in I} A_i$ be connected (since the closure of a connected set is connected). Suppose for contradiction that $\bigcup_{i \in I} A_i$ is disconnected, so that we can write it as a disjoint union of $\bigcup_{i \in I_1} A_i$ and $\bigcup_{i \in I_2} A_i$, where $I_1$ and $I_2$ are non-empty disjoint subsets of $I$. Because $K \subseteq \bigcup_{i \in I} A_i$, then we have that $K$ is the disjoint union of $K_1 := \bigcup_{i \in I_1} A_i$ and $K_2 := \bigcup_{i \in I_2} A_i$, both non-empty by construction, so that $K$ is not connected --- a contradiction. 

We now show that $\|f-g\|_\infty \le \eta$. 
For $x \notin \bigcup_{i\in I} A_i$, $g(x) =0$ and since $x \notin K$, $f(x) < \eta$, so that $|f(x)-g(x)| \le \eta$.
For $x \in A_i$, for some $i \in I$, $g(x) = f(y)$ for some $y \in \overline{A_i}$, and because $x,y \in K_1$ and $d(x,y) \leq \eps$, we have $|f(y) - f(x)| \leq \eta$. 
\end{proof}

\subsection{Functions with Disconnected Support}
\label{sec:more than two}

So far, we have focused our attention on densities whose support has connected interior. However, there is no real difficulty in extending our approach to more general densities. Indeed, given a function with support having disconnected interior, our approach can define a hierarchical clustering of each connected component of $\{f > 0\}$.

In more detail, let $f$ be a function of the form 
\begin{equation}
\label{more than two}
f = \sum_{j=1}^{N} f_j,
\end{equation}
where $\text{int}(\supp(f_j)) \cap \text{int}(\supp(f_k)) = \emptyset$ when $j \ne k$. First, suppose that each $f_j \in \cF$. If we apply the axioms of \secref{axioms}, we obtain that $C$ is a cluster for $f$ if and only if it is a cluster for one of the $f_j$, and consequently that the finest axiom cluster tree for $f$ is simply the union of the finest axiom cluster trees for the $f_j$, i.e.,
\[\cC^*_f = \bigcup_{j=1}^N \cC^*_{f_j}.\]
If $f$ is continuous, that is, if each $f_j$ in \eqref{more than two} is continuous, we may proceed exactly as in \secref{continuous_connected} and, based on the facts that $\cH_f = \bigcup_j \cH_{f_j}$, $h_f(C) = h_{f_j}(C)$ when $C \in \cC_{f_j}$, and 
\[d_M((\cC, h_f), (\cC', h_f)) = \max_{j = 1, \dots, N} d_M((\cC_j, h_{f_j}), (\cC'_j, h_{f_j})),\]
for any two axiom cluster trees for $f$, $\cC = \bigcup_j \cC_j$ and $\cC' = \bigcup_j \cC'_j$ (all cluster trees for $f$ are of that form), we find that \thmref{hartigan_equivalence} applies verbatim. 

This is as far as our approach goes. The end result is Hartigan's cluster tree, with the same caveats that come from using the merge distortion metric detailed in \secref{merge}. In particular, instead of a tree we have a forest with $N$ trees in general, one for each $f_j$. We find this end result natural, but if it is desired to further group regions (see \figref{multiple_components} for an illustration), one possibility is to apply a form of agglomerative hierarchical clustering to the `clusters', $\supp(f_1), \dots, \supp(f_N)$. (In our definition, these are not clusters of $\cH_f$, but this is immaterial.) Doing this presents the usual question of what clustering procedure to use, but given what we discuss in \secref{algorithm_discussion}, single-linkage clustering would be a very natural choice.

\begin{figure*}[h!]
\centering
\includegraphics[width=.6\linewidth]{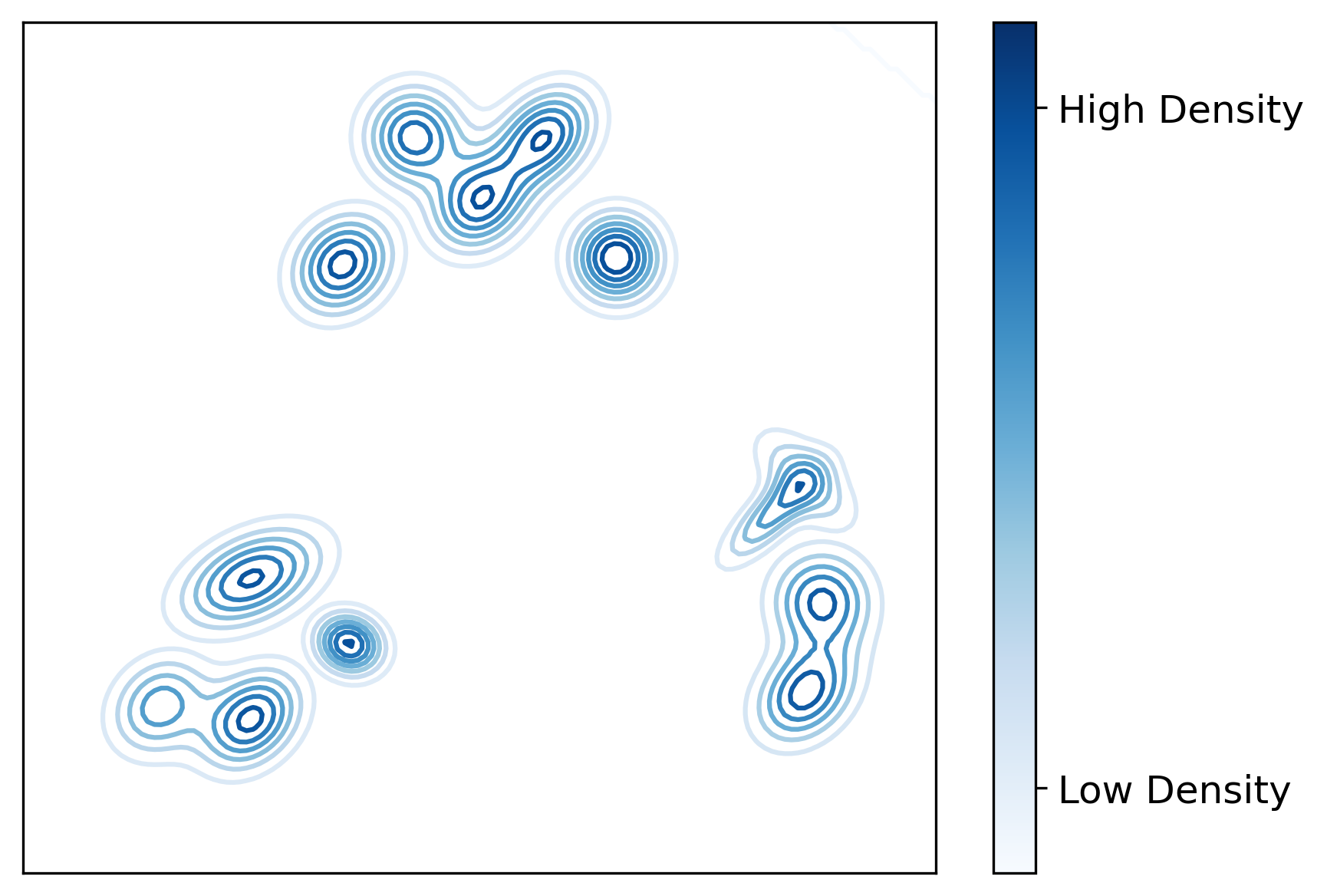}
\caption{An example of density with a support that has disconnected interior which appears to exhibit some clustering structure beyond that happening inside each of its eight connected components.}
\label{fig:multiple_components}
\end{figure*}

\section{Discussion}
\label{sec:discussion}

\subsection{Extensions}

We speculate that our axiomatic definition of hierarchical clustering can be extended beyond continuous functions (\secref{continuous}) to piecewise continuous functions with connected support by the same process of taking a limit of sequences in $\cF_{\rm int}$ that uniformly approximate the function of interest~$f$. 

The natural approach is to work within each region where $f$ is continuous, say $R$, and to consider there a partition of $R$ that would allow the definition of a piecewise constant function approximating $f$ uniformly on $R$.
The main technical hurdle is the construction of such a partition with the internally connected property, as a region $R$ may not be regular enough to allow for that. Additionally, even if there is a partition with the internally connected property on each region, taken together, these partitions may not have the internally connected property. We see some possible workarounds, but their implementation may be complicated. 

\subsection{Practical Implications}
We first examine some implications of adopting the axioms defining clusters in \secref{axioms}.

\subsubsection{Algorithms}
\label{sec:algorithm_discussion} 
A large majority of existing approaches to clustering return clusters that, when taken to the large-sample limit, do not necessarily satisfy the proposed axioms. This is true of K-means and all agglomerative hierarchical clustering that we know of, with the partial exception of single-linkage clustering, as repeatedly pointed out by \citet{hartigan1977, hartigan1981, hartigan1985}. Interestingly, single-linkage clustering arises out of various axiomatic discussion of (flat) clustering such as \cite{kleinberg2002, ben2008, zadeh2009, Cohen-Addad2018}, and also of hierarchical clustering \cite{jardine1968, carlsson2010}. 

This is despite the heavy criticism of single linkage in the literature for its ``chaining'' tendencies.
However, in practice ``chaining'' can be a concern, and regularized variants of single-linkage clustering are preferred. Most prominently, this includes DBSCAN~\cite{ester1996}, which has been shown to consistently estimate the Hartigan cluster tree \cite{wang2019} in the merge distortion metric when the underlying density is H\"{o}lder smooth, for example; see, also,  the ``robust'' variant of single-linkage clustering proposed in \cite{eldridge2015}, also shown to be consistent under some conditions.

\subsubsection{Clustering in High Dimensions}
\citet{wang2019} derive minimax rates for the estimation of the Hartigan cluster tree, which turn out to match the corresponding minimax rates for density estimation in the $L_{\infty}$ norm under assumptions of H\"{o}lder smoothness on the density. In particular, these rates exhibit the usual behavior in that they require that the sample size grow exponentially with the dimension. This is a real limitation of adopting the definition of cluster and cluster tree that we proposed in \secref{axioms}, although the usual caveats apply in that the curse of dimensionality is with respect to the intrinsic dimension if the density is in fact with respect to a measure supported on a lower-dimensional manifold~\cite{balakrishnan2013}; and `assuming' more structure can help circumvent the curse of dimensionality, as done for example in \cite{chacon2019mixture}, where a mixture is fitted to the data before applying modal clustering.

\subsection{Axiomatic Definition of Flat Clustering}

We have already mentioned some axiomatic approaches to defining flat \cite{ben2008, kleinberg2002, zadeh2009, puzicha2000, strazzeri2022, Cohen-Addad2018} and hierarchical \cite{jardine1968, jardine1967structure, SIBSON1970405, carlsson2010} clustering algorithms. 
But beyond these efforts, defining what clusters are has proven to be challenging for decades, in particular due to the fact that the problem is at the very core of Taxonomy. In his comprehensive review of the field at the time, \citet{cormack1971review} says that ``There are many intuitive ideas, often conflicting, of what constitutes a cluster, but few formal definitions.''
More recent discussions include those of \citet{estivill2002so}, \citet{vonluxburg2012} and that of \citet{hennig2015}.
As \citet{alma9918936281206531} say in their recent book on clustering, ``The clustering problem has been addressed extensively, although there is no uniform definition for data clustering and there may never be one''.

By focusing on hierarchical clustering, as others have done (e.g., \cite{carlsson2010}), we circumvented the thorny issue of defining the correct number of clusters, and propose simple axioms defining a cluster that are `natural' in our view --- in fact, as we pointed out earlier, the axioms we propose are hardly novel. However, the question of an axiomatic definition of a flat clustering of a population or density support remains intriguing, and we hope to address it in future work. For now, we remark that the definition most congruent with our definition of hierarchical clustering is that of \citet{fukunaga1975}, which when the density $f$ has some regularity amounts to partitioning $\supp(f)$ according to the basin of attraction of the gradient ascent flow defined by $f$. This has been argued in recent work \cite{arias2023,arias2023b}. It would be interesting to see whether one can arrive at this definition of clustering by the proposal of a `natural' set of axioms.


\bibliographystyle{abbrvnat}
\bibliography{ref}

\newpage
\appendix
\label{appendix:Appendix}

\section{Merge Distortion Metric}
\label{sec:merge}

In this section we discuss some limitations and issues of the merge distortion metric. We restrict our attention to the situation considered in~\cite{eldridge2015} where the height of a tree is defined by the density itself as in \eqref{height}. We denote the density by $f$ and the corresponding height function by $h$, and we identify a cluster tree $\cC$ with the dendrogram $(\cC, h)$ whenever needed. We only consider cluster trees $\cC$ made of clusters $C \in \cC$ satisfying $h(C) > 0$. 
Our discussion applies to non-negative functions, and throughout this section, $f$ will be non-negative.

The main issue that we want to highlight is that the merge distortion metric is only a pseudometric, and not a metric, on general cluster trees, as it is possible to have $d_M(\cC, \cC') = 0$ even when $\cC$ and $\cC'$ are not isomorphic. (To be clear, we take the partially ordered sets $\cC$ and $\cC'$ to be isomorphic if they are order isomorphic.) Two examples of this follow. 

\begin{example}
\label{ex:merge1}
Consider $f = \frac{1}{2}\bbI_{A_1} + \frac{1}{3}\bbI_{A_2} + \frac{1}{6}\bbI_{A_3}$ where the $A_i$ are disjoint sets with unit measure. Let  $\cC = \{A_1, A_1 \cup A_2, A_1 \cup A_2 \cup A_3 \}$ and $\cC' = \{A_1, A_2, A_1 \cup A_2, A_1 \cup A_2 \cup A_3 \}$. Both $\cC$ and $\cC'$ are cluster trees and it can be checked that $m_{\cC}(x,y) = m_{\cC'}(x,y)$ for all $x,y$ so that $d_M(\cC, \cC') = 0 $. However, the two trees are clearly not isomorphic. 
\end{example}

\begin{example}
Consider $f = \bbI_{A}$ where $A$ has unit measure. Then any collection of subsets of $A$ with a nested structure is a cluster tree for $f$, and the merge distortion distance between any pair of such cluster trees is zero. 
\end{example}

The issue in the preceding examples arises because a cluster tree contains nested clusters with the same cluster height. For example, in Example \ref{ex:merge1}, the addition of the cluster $A_2$ to $\cC$ does not change the merge height of any two points, and hence the merge distance between $\cC$ and $\cC'$ is zero.

Note that neither of these examples compare Hartigan trees, and we suspect in the original merge distortion metric paper~\cite{eldridge2015}, the claim (without proof) that if the merge distortion metric is zero then the trees must be isomorphic was intended in the context of comparing Hartigan trees. This is true for comparing Hartigan trees of continuous densities on $\bbR^d$, as for Hartigan trees of continuous functions $f,g$,  
\begin{equation}
\label{eq:merge_sup}
d_M(\cH_f, \cH_g) = \|f-g\|_{\infty}.
\end{equation}
This is established in \cite[lem 1]{kim2016}.
The proof of that result can be adapted to extend the result to the case where $f$ is continuous and $g$ is piecewise-continuous satisfying an additional regularity condition that, for every $x$ in its support, there exists a $\delta$ small enough such that $g$ is continuous on a half-ball centered at $x$ of radius $\delta$.

In view of \thmref{hartigan_equivalence}, we are particularly interested in understanding how different a cluster tree $\cC$ such that $d_M(\cC, \cH_f) = 0$ can be from $\cH_f$. 
The following results clarify the situation. 
The $\lambda$-level set of $f$ is defined as
\begin{equation}
\level_\lambda = \{f = \lambda\}.
\end{equation}

\begin{proposition}
\label{prp:merge zero sufficient} 
Let $f$ be a continuous density. 
Consider a collection of clusters of the form
\begin{equation}
\label{merge zero sufficient} 
\cC = \left( \cH_f \setminus \{C_i : i \in I\} \right) \cup \{\cS_j : j \in J\},
\end{equation}
where $C_i \in \text{cc}(\up_{\lambda_i})$ for some $\lambda_i > 0$ such that $\{\lambda_i : i \in I\}$ has empty interior; and $\cS_j$ is a cluster tree of $\level_{\lambda_j}$ for some $\lambda_j > 0$ such that $\{\lambda_j : j \in J\}$ are all distinct. Then $\cC$ is a cluster tree for $f$ satisfying $d_M(\cC, \cH_f) = 0$. 
\end{proposition}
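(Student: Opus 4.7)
The plan is to prove the two assertions separately: first that $\cC$ is a valid cluster tree, then that $d_M(\cC, \cH_f) = 0$ via pointwise equality of the merge heights $m_\cC$ and $m_{\cH_f}$.

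\textbf{Cluster tree property.} I would verify the nested-or-disjoint condition pairwise by cases. Two clusters in $\cH_f \setminus \{C_i\}$ inherit the property from $\cH_f$. Two clusters in a single $\cS_j$ inherit it from the cluster-tree structure of $\cS_j$ itself. Two clusters $S \in \cS_j$, $S' \in \cS_{j'}$ with $j \ne j'$ are disjoint because $S \subseteq \level_{\lambda_j}$, $S' \subseteq \level_{\lambda_{j'}}$ and distinctness of the $\lambda_j$'s forces $\level_{\lambda_j} \cap \level_{\lambda_{j'}} = \emptyset$. The substantive case is $S \in \cS_j$ paired with some $M \in \text{cc}(\up_\mu)$: if $\mu > \lambda_j$, then $M \subseteq \{f \ge \mu\}$ is disjoint from $S \subseteq \level_{\lambda_j}$; if $\mu \le \lambda_j$, then $S \subseteq \up_\mu$ is connected and so lies inside exactly one connected component of $\up_\mu$, giving either $S \subseteq M$ or $S \cap M = \emptyset$.

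\textbf{Merge heights agree.} The inequality $m_\cC(x,y) \le m_{\cH_f}(x,y)$ is routine: a cluster from $\cH_f \cap \cC$ containing $x,y$ contributes a height automatically bounded by $m_{\cH_f}(x,y)$, while any $S \in \cS_j$ containing $x,y$ satisfies $h_f(S) = \lambda_j$ (since $f \equiv \lambda_j$ on $S$), and connectedness of $S \subseteq \up_{\lambda_j}$ forces $x,y$ into a common component of $\up_{\lambda_j}$, giving $m_{\cH_f}(x,y) \ge \lambda_j$.

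For the reverse inequality, set $\lambda^* = m_{\cH_f}(x,y)$ and fix $\eps > 0$. For each $\lambda \in (0, \lambda^*)$ let $M_\lambda$ denote the connected component of $\up_\lambda$ containing both $x$ and $y$, so $M_\lambda \in \cH_f$. Assuming the ambient support is connected, $M_\lambda$ is a proper closed subset for $\lambda > \inf f$, whence $\partial M_\lambda \ne \emptyset$ and continuity of $f$ yields $f \equiv \lambda$ on $\partial M_\lambda$; hence $h_f(M_\lambda) = \lambda$. This makes $\lambda \mapsto M_\lambda$ injective on $(0, \lambda^*)$, and the same boundary-value reasoning applied to any removed $C_i$ in this chain forces $\lambda_i = h_f(C_i)$ as the unique valid parameter identifying $C_i$ with some $M_\lambda$. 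Consequently, the set of chain parameters $\lambda$ for which $M_\lambda$ is removed is contained in $\{\lambda_i : i \in I\}$, a set with empty interior. Therefore some $\lambda \in (\lambda^* - \eps, \lambda^*)$ has $M_\lambda \notin \{C_i\}$, giving $M_\lambda \in \cC$ with $h_f(M_\lambda) = \lambda > \lambda^* - \eps$. Letting $\eps \to 0$ gives $m_\cC(x,y) \ge \lambda^*$.

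\textbf{Main obstacle.} The delicate step is the last one: translating the empty-interior hypothesis on $\{\lambda_i\}$ in $\bbR$ into density of non-removed parameters within the chain $\{M_\lambda\}$. The bridge is the identification $h_f(M_\lambda) = \lambda$, which rests on $\partial M_\lambda \ne \emptyset$ and hence a connectedness assumption on the ambient support; once secured, injectivity of $\lambda \mapsto M_\lambda$ pins the parameter of a removed cluster to its height, at which point the empty-interior hypothesis directly yields the needed surviving $M_\lambda$. Handling a possibly disconnected support would require running this argument on each connected component of $\supp(f)$ separately.
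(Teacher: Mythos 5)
Your proof is essentially correct and follows the same route as the paper's: verify the nested-or-disjoint property by cases, show $m_\cC \le m_{\cH_f}$ directly (since $h_f(S) = \lambda_j$ for $S \in \cS_j$ and $S$ lies inside a component of $\up_{\lambda_j}$), and show $m_\cC \ge m_{\cH_f}$ by exploiting that $\{\lambda_i : i \in I\}$ has empty interior, hence its complement is dense, so surviving components $M_\lambda$ with height arbitrarily close to $\lambda^*$ exist. A useful added observation is that you explicitly flag the connectedness assumption underlying the identification $h_f(M_\lambda) = \lambda$ via $\partial M_\lambda \ne \emptyset$, which pins each removed cluster to a unique level $\lambda_i$; the paper's argument relies on the same tacit fact (that no cluster is a connected component of two distinct upper level sets) without calling it out.
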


\begin{proof}
We will use the fact that, by continuity of $f$, the supremum in \eqref{merge height} is attained, or more specifically, that if $\lambda = m_f(x,y)$, there is a connected component $C$ of $\up_\lambda$ that contains $x$ and $y$. The continuity of $f$ also implies that, for any subset $C$, $h(C) = h(\overline{C})$.

We first show that any $\cC$ defined as in \eqref{merge zero sufficient} is a cluster tree. 
Indeed, the removal of any number of clusters preserves the nested structure. Now, consider adding $\cS_j$, a cluster tree for $\level_{\lambda_j}$ for some $\lambda_j > 0$. We may clearly assume that $\cS_j$ is a cluster tree for a connected component of $\level_{\lambda_j}$, say $B_j$, which is itself contained in some $C_j \in \text{cc}(\up_{\lambda_j})$, so that $S \subseteq B_j \subseteq C_j$ for any $S \in \cS_j$. 
Take $C \in \cH_f$ distinct from $C_j$. 
We show that either $S \cap C = \emptyset$ or $S \subseteq C$ for any $S \in \cS_j$.
Let $\lambda = h(C)$ so that $C$ is a connected component of $\up_{\lambda}$. 
If $\lambda = \lambda_j$, $C_j$ and $C$ are disjoint. 
If $\lambda < \lambda_j$, $B_j$ is disjoint from $C$ unless $C$ contains $C_j$. If this is the case, $C$ also contains $B_j$, and therefore $S$. 
If $\lambda > \lambda_j$, $B_j \subseteq \level_{\lambda_j}$, $C \subseteq \up_{\lambda}$, and $\level_{\lambda_j} \cap \up_{\lambda} = \emptyset$.
Take $S' \in \cS_k$. We show that $S$ and $S'$ are either disjoint or nested. This is the case if $j = k$ by assumption that $\cS_j$ is a cluster tree. For $j \ne k$, $B_j$ and $B_k$ are disjoint since, by assumption, $\lambda_j \ne \lambda_k$ in that case. (We have used the fact that two distinct clusters in $\cH_f$ have disjoint boundaries.) 

To go further, we use the assumption that $\Lambda := \{\lambda_i : i \in I\}$ has empty interior. 
We want to show that $m_\cC(x,y) = m_f(x,y)$ for any pair of points $x$ and $y$. 
First, consider $\cC_1 = \cH_f \setminus \{\text{cc}(\up_{\lambda_i}) : i \in I\}$. Clearly, because the merge height is defined based on a supremum, $m_{\cC_1}(x,y) \le m_\cC(x,y)$. 
Let $\lambda = m_f(x,y)$, so that there is $C \in \text{cc}(\up_\lambda)$ such that $x,y \in C$. 
If $\lambda \ne \lambda_i$ for all $i \in I$, then $m_{\cC_1}(x,y) \ge \lambda$. 
If $\lambda = \lambda_i$ for some $i \in I$, we reason as follows. For $t < \lambda$, let $C_t$ be the connected component of $\up_t$ that contains $C$. Then $x,y \in C_t$ for all $t < \lambda$, and therefore $m_{\cC_1}(x,y) \ge t$ for any $t < \lambda$ not in $\Lambda$. Since $\Lambda$ has empty interior, its complement is dense in $\Lambda$, and by continuity of $f$ this implies that $m_{\cC_1}(x,y) \ge \lambda$.
We have thus established that $m_{\cC_1}(x,y) \ge \lambda = m_f(x,y)$, which then implies $m_{\cC}(x,y) \ge m_f(x,y)$.
Next, consider $\cC_2 = \cH_f \cup \{\cS_j : i \in J\}$, so that $m_{\cC_2}(x,y) \ge m_\cC(x,y)$. 
Consider $S \in \cS_j$, so that $S \subseteq C_j$ for some $C_j \in \text{cc}(\up_{\lambda_j})$.
Because $h(S) \le h(C_j)$ and $C_j \in \cH_f$, the merge height of $x$ and $y$ is not increased by adding $S$ to $\cH_f$. Therefore, $m_{\cC_2}(x,y) \le m_f(x,y)$, which then implies that $m_{\cC}(x,y) \le m_f(x,y)$.
\end{proof}

It turns out that the condition \eqref{merge zero sufficient} is not necessary for a cluster tree $\cC$ to satisfy $d_M(\cC, \cH_f) = 0$ --- although we believe it is not far from that. To deal with the possible removal of clusters from $\cH_f$, we only consider cluster trees satisfying the following regularity condition.
We say that a cluster tree $\cC$ is closed (for $h = h_f$) if it is closed under intersection and union in the sense that, for any sub-collection of nested clusters $\cS \subseteq \cC$, $\bigcap_{C \in \cS} C \in \cC$ and, if $\inf_{C \in \cS} h(C) > 0$, $\bigcup_{C \in \cS} C \in \cC$.
(Note that this is automatic when $\cS$ is finite, but below we will consider infinite sub-collections.)

\begin{lemma}
\label{lem:closed cluster tree}
Suppose $\cC$ is a closed cluster tree. Then the supremum defining the merge height in \eqref{merge height} is attained, meaning that for any $x, y$ there is $C \in \cC$ containing $x, y$ such that $m_\cC(x,y) = h(C)$. 
\end{lemma}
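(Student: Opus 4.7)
The plan is to exhibit the witnessing cluster as the intersection of all clusters that contain both $x$ and $y$, using only the intersection half of the closedness hypothesis.

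First, I would introduce $\cS = \{C \in \cC : x, y \in C\}$, so that by the definition \eqref{merge height} one has $\mu := m_\cC(x,y) = \sup_{C \in \cS} h(C)$ (the case $\cS = \emptyset$ is vacuous and can be set aside at the start). The key structural step is to observe that $\cS$ is totally ordered by inclusion. Indeed, any two $C, C' \in \cS$ both contain the points $x$ and $y$, so $C \cap C' \neq \emptyset$, and the defining property of a cluster tree (\defref{cluster tree}) forces $C$ and $C'$ to be nested whenever they are not disjoint. Thus $\cS$ is a nested sub-collection of $\cC$, of exactly the type to which the closedness hypothesis applies.

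Next, invoke that hypothesis to obtain $C^* := \bigcap_{C \in \cS} C \in \cC$. Each $C \in \cS$ contains $x$ and $y$, hence so does $C^*$, which therefore lies in $\cS$ itself. Since $C^* \subseteq C$ for every $C \in \cS$ and the height function is non-increasing on inclusions, we have $h(C^*) \ge h(C)$ for every $C \in \cS$; taking the supremum over $\cS$ yields $h(C^*) \ge \mu$. The reverse inequality $h(C^*) \le \mu$ is immediate from $C^* \in \cS$ and the definition of $\mu$. Hence $h(C^*) = \mu$, so the supremum is attained at $C^*$.

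There is no serious obstacle. The only check worth highlighting is the chain property of $\cS$, which rests entirely on the cluster-tree axiom that two clusters are either disjoint or nested. Note also that the union half of the closedness hypothesis (and its $\inf h > 0$ caveat) plays no role here; it is the intersection closure alone that supplies the minimal cluster in $\cS$ and thereby realizes the supremum.
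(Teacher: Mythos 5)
Your proof is correct and takes essentially the same approach as the paper's: both exploit the fact that the clusters containing $x$ and $y$ form a nested chain and then apply the intersection half of the closedness hypothesis to produce the witnessing cluster. The only cosmetic difference is that the paper intersects a countable sequence $(C_m)$ chosen so that $h(C_m) > \lambda(1-1/m)$, whereas you intersect the entire chain $\cS$ at once and deduce $h(C^*) = \mu$ from $C^* \in \cS$ together with monotonicity of $h$ --- a slight streamlining, not a different route.
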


\begin{proof}
Fix $x,y$ and let $\lambda = m_\cC(x,y)$, assumed to be strictly positive. It suffices to show that there is a cluster that contains these points with height at least $\lambda$.

By the definition in \eqref{merge height}, for any $m \ge 1$ integer, there is $C_m \in \cC$ that contains $x,y$ such that $h(C_m) > \lambda (1-1/m)$. Note that $C_m$ and $C_n$ have at least $x,y$ in common, so that they must be nested. Therefore the sub-collection $\{C_m : m \ge 1\}$ is nested, and by the fact that $\cC$ is closed, $C = \bigcap_{m \ge 1} C_m$ is a cluster in $\cC$. By monotonicity of $h$, $h(C) \ge h(C_m)$ for all $m$, so that $h(C) \ge \lambda$.
\end{proof}


To simplify things further, we just avoid talking about what happens within level sets. 
We will use the following results. 

\begin{lemma}
\label{lem:cc}
For any $s, t > 0$, the connected components of $\{f > s\}$ and those of $\{f\ge t\}$ are either disjoint or nested. 
\end{lemma}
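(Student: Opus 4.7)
The plan is to reduce the statement to the elementary topological fact that any connected subset of a space lies inside a unique connected component of that space, by case-splitting on the sign of $s - t$ and exploiting the obvious nesting between $\{f > s\}$ and $\{f \ge t\}$. Specifically, I would fix a connected component $A$ of $\{f > s\}$ and a connected component $B$ of $\{f \ge t\}$, and assume $A \cap B \ne \emptyset$; it then suffices to show that either $A \subseteq B$ or $B \subseteq A$.

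First I would handle the case $s \ge t$. Here $\{f > s\} \subseteq \{f \ge t\}$, so $A$ is a connected subset of $\{f \ge t\}$, which forces $A$ to sit inside a single connected component of $\{f \ge t\}$. Since $A$ meets $B$ and $B$ is itself such a connected component, this component must in fact be $B$, giving $A \subseteq B$.

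Next I would handle the case $s < t$ symmetrically. Now $\{f \ge t\} \subseteq \{f > s\}$ (since $f \ge t > s$ implies $f > s$), so $B$ is a connected subset of $\{f > s\}$, hence contained in a single connected component of $\{f > s\}$, and the fact that $B$ meets $A$ identifies this component with $A$, yielding $B \subseteq A$.

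The only subtlety worth flagging is keeping the strict versus non-strict inequalities straight: in the first case we need the inclusion $\{f > s\} \subseteq \{f \ge t\}$ (true when $s \ge t$), and in the second case we need $\{f \ge t\} \subseteq \{f > s\}$, which requires the strict inequality $s < t$. I do not anticipate a genuine obstacle here; local connectedness is not even needed for this particular lemma (it is used elsewhere in the paper to get openness of connected components of open sets), and the argument uses only the definition of a connected component.
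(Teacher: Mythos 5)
Your proof is correct and follows essentially the same route as the paper's: case-split on the sign of $s - t$, observe the obvious inclusion between the two level sets in each case, and use the fact that a connected subset lies in a unique connected component. No gaps, and your side remark that local connectedness is not needed here is also accurate.
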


\begin{proof}
Let $R$ be a connected component of $\{f>s\}$ and let $C$ be a connected component of $\{f\ge t\}$. Assume they intersect, i.e., $C \cap R \ne \emptyset$.
First, assume that $s < t$.
In that case $C \subseteq \{f > s\}$, and being connected, there is a unique connected component of $\{f > s\}$ that contains it, which is necessarily $R$.
The reasoning is similar if $s \ge t$. 
Indeed, in that case $R \subseteq \{f \ge t\}$, and being connected, there is a unique connected component of $\{f \ge t\}$ that contains it, which is necessarily~$C$.
\end{proof}

Recall that a mode is simply a local maximum with strictly positive value, i.e., it is a point $x$ such that $f(x) > 0$ and $f(x) \ge f(y)$ whenever $d(x,y) \le r$ for some $r > 0$.

\begin{lemma}
\label{lem:modes}
Consider a continuous function $f$ with bounded upper level sets. Then each connected component of any of its upper level sets contains at least one mode.
\end{lemma}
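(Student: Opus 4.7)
The plan is to construct a mode in $C$ by taking the argmax of $f$ on $C$ and using local connectedness of $\Omega$ to upgrade it from a global maximum on $C$ to a local maximum on $\Omega$.

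First, I would observe that $C$ is closed (as a connected component of the closed set $\up_\lambda$, closed by continuity of $f$) and bounded (by hypothesis). Assuming the Heine--Borel property on $\Omega$, as invoked elsewhere in the paper, $C$ is compact, so $f|_C$ attains its supremum at some $x^* \in C$ with $\mu := f(x^*) \ge \lambda > 0$. Let $C^*$ denote the connected component of $\up_\mu$ containing $x^*$. By \lemref{cc}, $C^* \subseteq C$, and combining $f \le \mu$ on $C$ with $f \ge \mu$ on $C^*$ yields $f \equiv \mu$ on $C^*$.

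The central step is to show that $x^*$ is a mode. If $\mu > \lambda$, then $x^*$ lies in the open set $\{f > \lambda\}$, and local connectedness of $\Omega$ produces a connected open neighborhood $U$ of $x^*$ contained in $\{f > \lambda\} \subseteq \up_\lambda$. Since $U$ is connected and meets $C$ at $x^*$, it must be contained in $C$, so $f(y) \le \mu = f(x^*)$ for every $y \in U$; combined with $f(x^*) > 0$, this shows $x^*$ is a mode. In the remaining case $\mu = \lambda$, $f$ is identically $\lambda$ on $C$, and any interior point of $C$ (relative to $\Omega$) is trivially a mode.

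The main obstacle I anticipate is the degenerate subcase $\mu = \lambda$ with $C$ having empty interior, where $C$ could be a single accumulation point of higher-lying level-set components and need not contain any mode under only the stated hypotheses. Sidestepping this likely requires an implicit regularity assumption --- for instance, that connected components of upper level sets have nonempty interior, consistent with the paper's recurring ``connected with connected interior'' framework --- or a Zorn's lemma descent: apply Zorn to the family of connected components of upper level sets nested inside $C$, ordered by reverse inclusion, to extract a maximal element on which $f$ is constant (any strictly higher-lying connected component nested in it would contradict maximality), and then run the $\mu > \lambda$ argument there.
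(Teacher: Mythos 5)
Your argument in the case $\mu > \lambda$ is correct and is in fact cleaner than the paper's, which does not split into cases but instead seeks a radius $\eta > 0$ such that the ball of radius $\eta$ around the argmax $x_0$ meets $\up_\lambda$ only inside $C$; you avoid this by noting that a connected open neighborhood of $x^*$ contained in $\{f > \lambda\}$ is a connected subset of $\up_\lambda$ meeting $C$, hence lies in $C$. The gap you flag in the case $\mu = \lambda$ with $C$ of empty interior is, however, genuine, and neither of your proposed repairs closes it. The Zorn construction does yield a minimal (for inclusion) level-set component $C^* \subseteq C$ on which $f$ is constant, say $f \equiv \mu^*$, but nothing forces $C^*$ to be a component of $\up_s$ for some $s$ strictly below $\mu^*$; when $C^*$ is a component only of $\up_{\mu^*}$ itself, you are in exactly the same position as before, with no lower superlevel set in which $C^*$ sits as an open component.

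In fact the gap cannot be closed under the stated hypotheses, because the lemma is false. In $\Omega = \bbR^2$, set $p_n = (1/n, 0)$, $r_n = 1/(10n^2)$, $g(x) = \max(0, 1 - |x|)$, let $b_n$ be a tent of height $1 + 1/n$ supported on the ball of radius $r_n$ about $p_n$, and let $f = \max\big(g, \sup_n b_n\big)$. One checks that $f$ is continuous with compact support, and that $\up_1 = \{0\} \cup \bigcup_{n\ge 1}\{x : |x - p_n| \le \rho_n\}$ with $\rho_n = r_n/(n+1)$. The singleton $\{0\}$ is a connected component of $\up_1$ (each of the small closed balls is clopen in $\up_1$), yet $0$ is not a mode: $f(p_n) = 1 + 1/n > 1 = f(0)$ with $p_n \to 0$. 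The same example reveals the flaw in the paper's own proof, which infers from ``$d(C,C') > 0$ for each $C'$'' the existence of a uniform $\eta > 0$; here $\inf_{C' \ne C} d(C,C') = 0$ because the components accumulate at $C = \{0\}$. The lemma is only invoked in \lemref{finitely many modes}, under the added hypothesis of locally finitely many modes, which excludes this pathology; carrying that hypothesis into the present lemma is the natural repair, and under it your $\mu > \lambda$ case can be bootstrapped into a full proof, since an accumulation of distinct components at $x^*$ would each produce a distinct mode inside the bounded set $\up_\lambda$.
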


\begin{proof}
Take $C \in \text{cc}(\up_\lambda)$ for some $\lambda > 0$. 
Because $f$ is continuous and $\up_\lambda$ is compact, $C$ is compact, so that there is $x_0 \in C$ such that $f(x_0) = \max_C f$. Because the distance function is continuous,\footnote{As is well-known, $|d(x,y) - d(x',y')| \le d(x,x') + d(y,y')$ by a simple use of the triangle inequality, so that $d: \Omega \times \Omega \to \bbR$ is Lipschitz and continuous when equipping $\Omega \times \Omega$ with the product topology.} and the fact that all connected components of $\up_\lambda$ are compact, we have that $d(C, C') := \min_{x \in C, x' \in C'} d(x,x') > 0$ for all $C' \in \text{cc}(\up_\lambda)$, so that there is $\eta > 0$ such that $d(C, C') > \eta$ for all $C' \in \text{cc}(\up_\lambda)$. Now, consider $y$ within distance $\eta$ of $x_0$. If $y\in C$, then $f(y) \le \max_C f = f(x_0)$; and if $y \notin C$, then $y \notin \up_\lambda$, and therefore $f(y) < \lambda \le f(x_0)$. We can conclude that $x_0$ is a mode. 
\end{proof}

\begin{lemma}
\label{lem:finitely many modes}
Consider a continuous function $f$ with bounded upper level sets and locally finitely many modes. Then, $f$ satisfies the following property: 
\begin{equation}
\label{finitely many modes}
\begin{gathered}
\text{For every $\lambda > 0$, if $\eps>0$ is small enough, each connected component of $\{f>\lambda\}$} \\ \text{contains exactly one connected component of $\{f > \lambda+\eps\}$.}
\end{gathered}
\end{equation}
\end{lemma}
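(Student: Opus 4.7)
The plan is to first show that $\{f > \lambda\}$ has only finitely many connected components, then find a threshold $\eps_i > 0$ for each component $R_i$, and take the minimum at the end. For the finiteness, I would show that every connected component $R$ of $\{f > \lambda\}$ contains at least one mode of $f$ with value strictly greater than $\lambda$. By local connectedness of $\Omega$, $R$ is open, and $\overline R$ is compact (this is the compactness of upper level sets used implicitly throughout the section, as in the proof of \lemref{modes}). Any $x \in \partial R$ satisfies $f(x) = \lambda$: by continuity $f(x) \ge \lambda$, but $x$ cannot lie in $\{f > \lambda\}$, as otherwise its connected component there would have to equal $R$ (since every neighborhood of $x$ meets $R$) while $x \notin R$. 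Hence the maximum of $f$ on $\overline R$, which exceeds $\lambda$, is attained inside $R$ at some $x_R$, and $x_R$ is a mode by the same argument as in \lemref{modes}. Combined with local finiteness of the modes and compactness of $\{f \ge \lambda\}$, this shows there are only finitely many modes with value at least $\lambda$, and hence finitely many components $R_1, \ldots, R_N$ of $\{f > \lambda\}$.

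Fix $R = R_i$; the components of $\{f > \lambda + \eps\}$ contained in $R$ are exactly those of $R \cap \{f > \lambda + \eps\}$. For the existence of at least one, take $\eps < f(x_R) - \lambda$; then the component of $\{f > \lambda + \eps\}$ through $x_R$ sits in $R$. For at most one, suppose otherwise: there is a sequence $\eps_n \downarrow 0$ and two distinct components $A_n, B_n \subseteq R$ of $\{f > \lambda + \eps_n\}$. The same mode argument applied to $A_n$ and $B_n$ produces modes $\alpha_n \in A_n$ and $\beta_n \in B_n$, all in the compact set $\overline R$, which by local finiteness contains only finitely many modes. Pigeonhole and subsequence extraction give $\alpha_n \equiv \alpha$ and $\beta_n \equiv \beta$ with $\alpha \ne \beta$, since $A_n \cap B_n = \emptyset$.

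The heart of the argument is then a clopen step. Passing to a further subsequence so that $\eps_n$ is decreasing and the $A_n$ are increasing, let $A_\infty = \bigcup_n A_n$: a non-empty open connected subset of $R$ through $\alpha$. To see it is also closed in $R$, pick $x \in R \setminus A_\infty$; continuity and openness of $R$ give an open neighborhood $U \subseteq R$ of $x$ on which $f > \lambda + \delta/2$ for some $\delta > 0$. If $U$ met some $A_n$ with $\eps_n < \delta/2$, then $U \cup A_n$ would be a connected subset of $\{f > \lambda + \eps_n\}$, and by maximality $U \subseteq A_n \subseteq A_\infty$, contradicting $x \notin A_\infty$. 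Hence $A_\infty$ is clopen in the connected set $R$, so $A_\infty = R$, putting $\beta \in A_n$ for large $n$ and contradicting $A_n \cap B_n = \emptyset$. Taking $\eps_0 = \min(\eps_1, \ldots, \eps_N) > 0$ finishes the proof. The main obstacle I foresee is precisely this uniqueness step: since $\Omega$ is only assumed to be locally connected (not locally path-connected), one cannot simply join $\alpha$ and $\beta$ by a path in $R$ and bound $f$ from below along it; the clopen argument sidesteps this using only openness and maximality of components.
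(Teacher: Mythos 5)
Your proof is correct, and it follows the same overall contour as the paper's---a contradiction argument along a sequence $\eps_n\downarrow 0$, finiteness of modes, and the observation that the components of $\{f>\lambda+\eps_n\}$ inside a fixed component $R$ of $\{f>\lambda\}$ organize into increasing chains. Where you diverge is the closing move. The paper passes to a subsequence on which $R$ contains a \emph{constant} number $m\ge 2$ of components, forms all $m$ nested-chain unions $A_1,\dots,A_m$, checks they are open, pairwise disjoint, and cover $R$ (since every $x\in R$ satisfies $f(x)>\lambda+\eps_n$ eventually), and then reads off the contradiction directly from connectedness of $R$. You instead pigeonhole modes to pin down two fixed modes $\alpha\ne\beta$ in the two component families, form the single chain union $A_\infty$ through $\alpha$, and show $A_\infty$ is clopen in $R$, hence all of $R$, which expels $\beta$. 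The two arguments are essentially equivalent---your clopen set is exactly one piece of the paper's open disjoint cover---but yours avoids having to normalize $m_n$ to a constant, at the cost of a somewhat longer closedness verification. One small thing to make explicit in that verification: the neighborhood $U$ of $x\in R\setminus A_\infty$ must be taken \emph{connected} (available by the standing local connectedness of $\Omega$) for the step ``$U\cup A_n$ is a connected subset of $\{f>\lambda+\eps_n\}$, hence $U\subseteq A_n$'' to go through. A further, minor, stylistic difference: you re-derive from scratch that each component of the \emph{open} set $\{f>\lambda\}$ contains a mode, whereas the paper leans on its \lemref{modes} (stated for closed upper level sets $\{f\ge\lambda\}$); your explicit treatment of the open case, via the boundary values $f\equiv\lambda$ on $\partial R$, is a bit more careful on this point.
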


\begin{proof}
Take any upper level set $U$. Since $U$ is bounded, it can only include a finite number of modes. And since each of its connected components contains at least one mode by \lemref{modes}, it must be the case that $U$ has at most as many connected components as it contains modes.

We now assume that the upper level sets all have finitely many components, and show that \eqref{finitely many modes} holds.
We do so by contradiction. Therefore, assume that \eqref{finitely many modes} does not hold so that there is $\lambda > 0$ and $R$ a connected component of $\{f > \lambda\}$, and a sequence $(\eps_n)$, which we can take to be decreasing and converging to zero, such that for each $n$, $R$ contains at least two connected components of $\{f \ge \lambda+\eps_n\}$. Because $R$ is a bounded region, applying the first part of the statement we find that $R$ can only contain finitely many components of $\{f > \lambda+\eps_n\}$, denoted $A^n_1, \dots, A^n_{m_n}$, with $2 \le m_n \le M$ for all~$n$, where $M$ is the number of modes within $\up_\lambda$. By taking a subsequence if needed, we may further assume that $m_n = m \ge 2$ for all $n$. By the usual nesting property, at every $n$, for each $i$, there is exactly one $j$ such that $A^n_i \subseteq A^{n+1}_j$, and so that we may choose the indexing in such a way that $A^n_i \subseteq A^{n+1}_i$ for all $n$ and all $i$. This allows us to define $A_i = \bigcup_n A^n_i$ for $i = 1, \dots, m$. Since $f$ is continuous, $\{f > \lambda + \eps_n\}$ is open, and therefore so are its connected components (since we assume throughout that $\Omega$ is locally connected), and therefore each $A^n_i$ is open, which then carries over to each $A_i$ being open. The $A_i$ are disjoint because $A^n_i \cap A^{n'}_{i'} = \emptyset$ unless $i = i'$. Therefore, because $R = \bigcup_i A_i$, $R$ must be disconnected --- a contradiction.    
\end{proof}

\begin{proposition}
\label{prp:merge zero}
Let $f$ be a continuous density.
Assume that $\cC$ is a closed cluster tree such that $d_M(\cC, \cH_f) = 0$. Then $\cC$ contains $\cH_f$. 
If, in addition, \eqref{finitely many modes} holds, then, for every $C \in \cC$, $\{f > h(C)\} \cap C$ is  some union of connected components of $\{f > h(C)\}$.
%
%
\end{proposition}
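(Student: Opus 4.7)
My plan is to prove the two claims in order, leaning on the identity $m_{\cC} = m_f$ that follows from $d_M(\cC, \cH_f) = 0$, combined with the attainability of merge-height suprema supplied by \lemref{closed cluster tree}.

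For the first claim, fix $H \in \cH_f$, say a connected component of $\{f \ge \lambda\}$ for some $\lambda > 0$, and pick any $x \in H$. For each $y \in H$, the set $H$ is itself a connected witness giving $m_f(x,y) \ge \lambda$, so $m_{\cC}(x,y) \ge \lambda$, and \lemref{closed cluster tree} produces some $C_y \in \cC$ with $x, y \in C_y$ and $h(C_y) \ge \lambda$. Let $\cS_x := \{C \in \cC : x \in C,\ h(C) \ge \lambda\}$. Any two clusters of $\cC$ both containing $x$ are necessarily nested, so $\cS_x$ is a nested subfamily, and the previous step shows it is nonempty; moreover $\inf_{C \in \cS_x} h(C) \ge \lambda > 0$, so by closedness of $\cC$ the union $C_x := \bigcup \cS_x$ is a cluster of $\cC$. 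It is connected (union of connected sets sharing $x$), and $\inf_{C_x} f \ge \lambda$ because every point of $C_x$ lies in some $C \in \cS_x$ with $\inf_C f = h(C) \ge \lambda$; hence $C_x$ is a connected subset of $\{f \ge \lambda\}$ through $x$, forcing $C_x \subseteq H$. Conversely each $y \in H$ lies in $C_y \in \cS_x$, so $H \subseteq C_x$. Therefore $H = C_x \in \cC$.

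For the second claim, fix $C \in \cC$, set $\lambda := h(C)$, and let $R$ be a connected component of $\{f > \lambda\}$ meeting $C$ at some $x \in R \cap C$; for arbitrary $y \in R$ I aim to show $y \in C$. The pivotal intermediate step is $m_f(x,y) > \lambda$. For $\eps > 0$ small enough, \eqref{finitely many modes} delivers a unique component $R_\eps$ of $\{f > \lambda + \eps\}$ contained in $R$, and the overlap-to-containment argument of \lemref{cc} shows that any component of $\{f > \lambda + \eps\}$ meeting $R$ must lie inside $R$ and hence coincide with $R_\eps$. Choosing $\eps < \min(f(x), f(y)) - \lambda$ places both $x$ and $y$ inside $R_\eps$, which is connected with $\inf_{R_\eps} f \ge \lambda + \eps$; hence $m_f(x,y) \ge \lambda + \eps > h(C)$. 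Consequently $m_{\cC}(x,y) > h(C)$, and \lemref{closed cluster tree} furnishes $C^* \in \cC$ containing $x, y$ with $h(C^*) > h(C)$. Since $x \in C \cap C^*$, the clusters $C$ and $C^*$ are nested, and the monotonicity of $h$ along inclusion forces $C^* \subseteq C$, giving $y \in C$. Thus $R \subseteq C$, as required.

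The main technical obstacle is pinning down the strict inequality $m_f(x,y) > \lambda$ for every pair $x, y \in R$; this is precisely where \eqref{finitely many modes} is indispensable, for without it $R$ need not be exhausted by a nested family of connected subsets whose infima of $f$ exceed $\lambda$, leaving open the possibility that $m_f(x,y) = \lambda$ for some pair inside $R$ and breaking the argument that ``lifts'' $y$ into $C$ via a higher-height cluster $C^*$. Everything else is routine bookkeeping with the closure of $\cC$ under nested unions and the attainability supplied by \lemref{closed cluster tree}.
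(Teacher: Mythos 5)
Your proof is correct, and while Part~1 is essentially the paper's argument (with a small improvement), Part~2 takes a genuinely different route.

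For the first claim, the paper fixes a point $x \in V$ with $f(x) = \lambda$ so that $m_f(x,y) = \lambda$ exactly, which forces $h(C_y) = \lambda$ and hence $C_y \subseteq V$. You instead pick an arbitrary $x \in H$, collect \emph{all} clusters through $x$ of height $\ge \lambda$ into $\cS_x$, and argue that the closed union $C_x = \bigcup \cS_x$ sits inside $\{f \ge \lambda\}$ and hence inside $H$. This avoids the tacit assumption that a point with $f(x) = \lambda$ exists in $V$, which is a mild but real gain in generality; the nested-union bookkeeping is otherwise the same as the paper's.

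For the second claim, the two approaches diverge. The paper builds the unique component $V_\eps$ of $\{f \ge \lambda+\eps\}$ inside $R$, invokes Part~1 to place $V_\eps$ in $\cC$, uses the tree property of $\cC$ to get $V_\eps \subseteq C$, and finally writes $R = \bigcup_{\eps>0} V_\eps$. You instead fix $y \in R$, use \eqref{finitely many modes} to produce a connected set $R_\eps$ through $x$ and $y$ with $\inf_{R_\eps} f \ge \lambda + \eps$, deduce $m_f(x,y) > \lambda$, apply \lemref{closed cluster tree} to obtain $C^* \ni x,y$ with $h(C^*) > h(C)$, and conclude $C^* \subsetneq C$ by monotonicity of $h$ on the nested pair $\{C, C^*\}$. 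Your route uses the merge height and the closedness of $\cC$ directly, and never needs $V_\eps$ to be a member of $\cC$ nor the exhaustion $R = \bigcup_\eps V_\eps$; it is arguably the leaner of the two, at the cost of invoking \lemref{closed cluster tree} a second time. Both correctly isolate \eqref{finitely many modes} as the hypothesis that furnishes the nested connected family inside $R$ witnessing $m_f(x,y) > \lambda$.
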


\begin{figure}[h]
\centering
\begin{tabular}{ccc}
\includegraphics[width=0.4\textwidth]{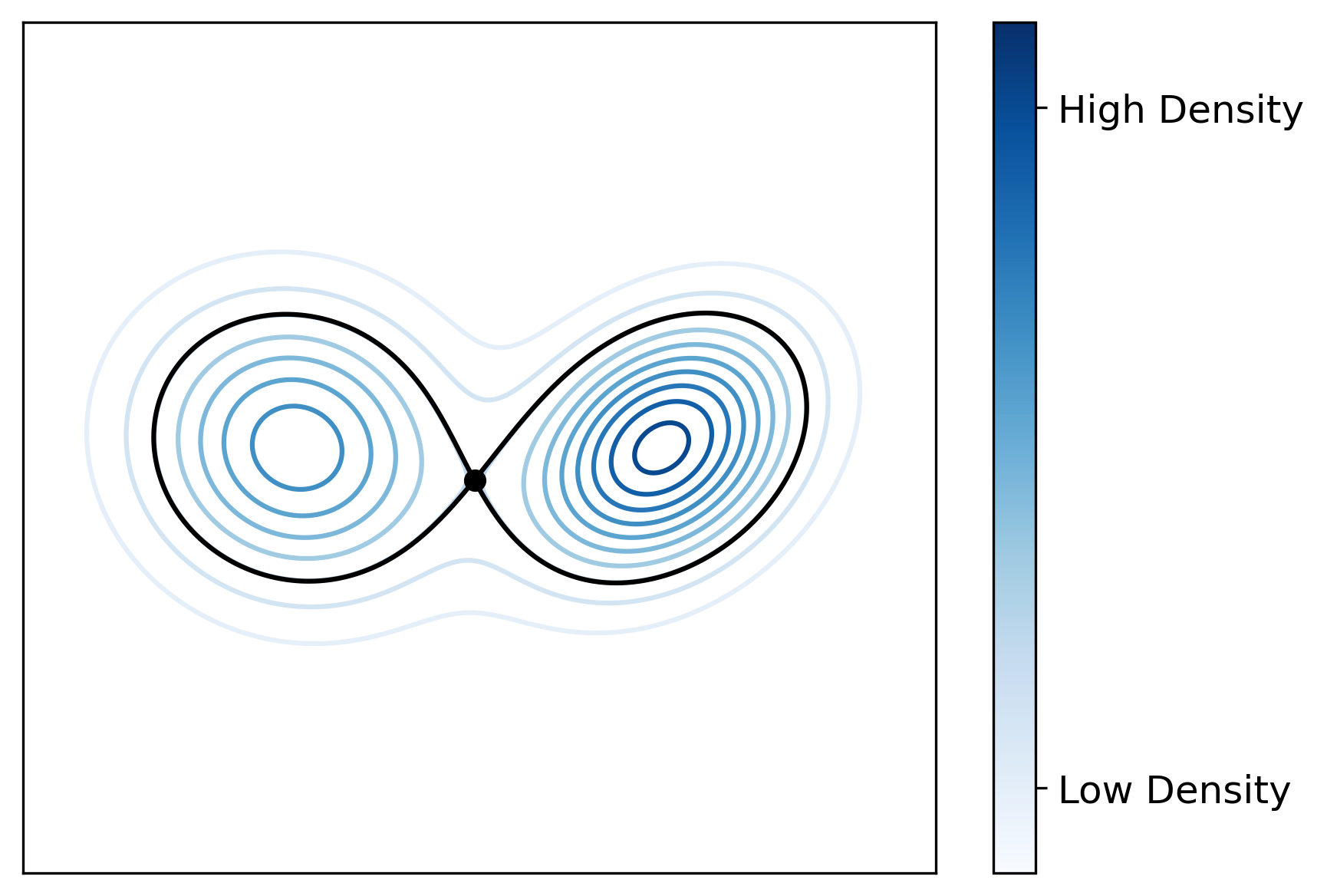} &
\includegraphics[width=0.4\textwidth]{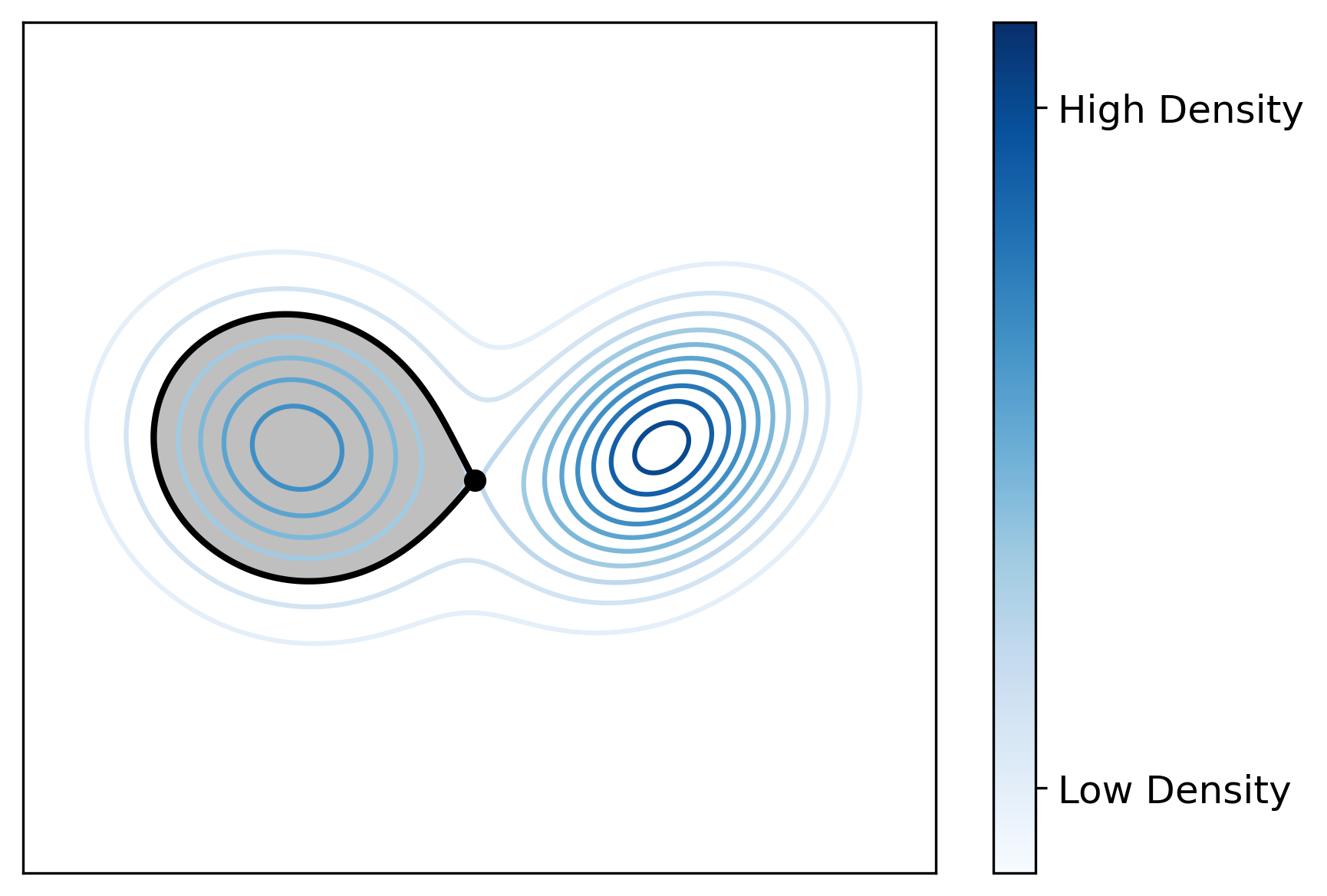}
\end{tabular}
\caption{Left: The level sets of a bimodal Gaussian. The upper level set ``splits" at the level containing the saddle point. (The level set and saddle point are highlighted.) Right: The highlighted cluster is $\overline{R_1}$. The addition of this cluster to $\cH_f$ forms a valid and distinct cluster tree $\cC_1$.}
\label{fig:split_example}
\end{figure}

\begin{proof}
Let $V \in \text{cc}(\up_\lambda)$ for some $\lambda > 0$. Fix $x \in V$ such that $f(x) = \lambda$. Take $y \in V$. First, $m_f(x,y) = \lambda$, and since $m_\cC(x,y) = m_f(x,y)$ and $\cC$ is assumed closed, there is $C_y \in \cC$ containing $x, y$ such that $h(C_y) = \lambda$. 
Note that this implies that $C_y \subseteq V$ since $V$ is the largest connected set that contains $x,y$ such that $h(V) \ge \lambda$.
If $y \ne z$ are both in $V$, we have that $x \in C_y \cap C_z$, so that $C_y$ and $C_z$ are nested. Therefore, the collection $\{C_y : y \in V\}$ is nested, and because $\cC$ is closed, $C = \bigcup_{y \in V} C_y$ belongs to $\cC$. Since  $C_y \subseteq V$ for all $y$, we have $C \subseteq V$; and since $C_y$ contains $y$ for all $y$, we also have $C \supseteq V$; therefore, $C = V$, and we conclude that $V \in \cC$.

For the second part, assume that \eqref{finitely many modes} holds. Take $C \in \cC$ with $\lambda = h(C) > 0$. We want to show that, if $R$ is a connected component of $\{f>\lambda\}$ such that $R \cap C \ne \emptyset$, then $R \subseteq C$.
For $\eps>0$ small enough, $R$ contains exactly one connected component of $\{f > \lambda+\eps\}$, which by way of \lemref{cc} implies that $R$ contains exactly one connected component of $\{f \ge \lambda+\eps\}$, which we denote by $V_\eps$. By the first part of the proposition, which we have already established, $V_\eps$ belongs to $\cC$, and $\cC$ being a cluster tree, we have either $V_\eps \cap C = \emptyset$ or $V_\eps \subseteq C$. Only the latter is possible when $\eps$ is small enough. Indeed, take $x \in R \cap C$, so that $f(x) > \lambda$. Let $\eps>0$ be small enough that $f(x) \ge \lambda+\eps$, so that $x \in V_\eps$. Hence, $V_\eps \subseteq C$ when $\eps>0$ is small enough, and we then use the fact that $R = \bigcup_{\eps>0} V_\eps$ to conclude that $R \subseteq C$.
\end{proof}

%

We remark that, when $f$ is `flat nowhere' in the sense that
\begin{equation}
\text{$\overline{\{f > \lambda\}} = \{f \ge \lambda\}$ for any $\lambda > 0$,} 
\end{equation}
then, under the same conditions as in \prpref{merge zero}, any $C \in \cC$ is closure of the union of connected components of $\{f > \lambda\}$. This still leaves the possibility that $\cC \ne \cH_f$, and it can indeed happen --- unless $f$ is unimodal. To see this, for simplicity, suppose that $f$ is `flat nowhere' and has exactly two modes. Assuming that its support has connected interior, there is exactly one level $\lambda > 0$ where the upper level set `splits' in the sense that $\{f > \lambda\}$ has two connected components, say $R_1$ and $R_2$, while $\{f \ge \lambda\}$ is connected. Then, for $j \in \{1,2\}$, $\overline{R_j}$ does not belong to $\cH_f$ and $\cC_j = \cH_f \cup \{\overline{R_j}\}$ is a cluster tree satisfying $d_M(\cC_j, \cH_f) = 0$.  
(Note that $\cH_f \cup \{\overline{R_1}, \overline{R_2}\}$ is not a cluster tree since $\overline{R_1}$ and $\overline{R_2}$ intersect but are not nested.) The situation is illustrated in \figref{split_example}. 

\section{Euclidean Spaces} 
\label{sec:euclidean}

In this section we show that Euclidean spaces have the internally connected partition property by constructing a `shifted' grid that has the required property. 

\begin{figure}[h!]
    \centering
    \includegraphics[width=0.4\linewidth]{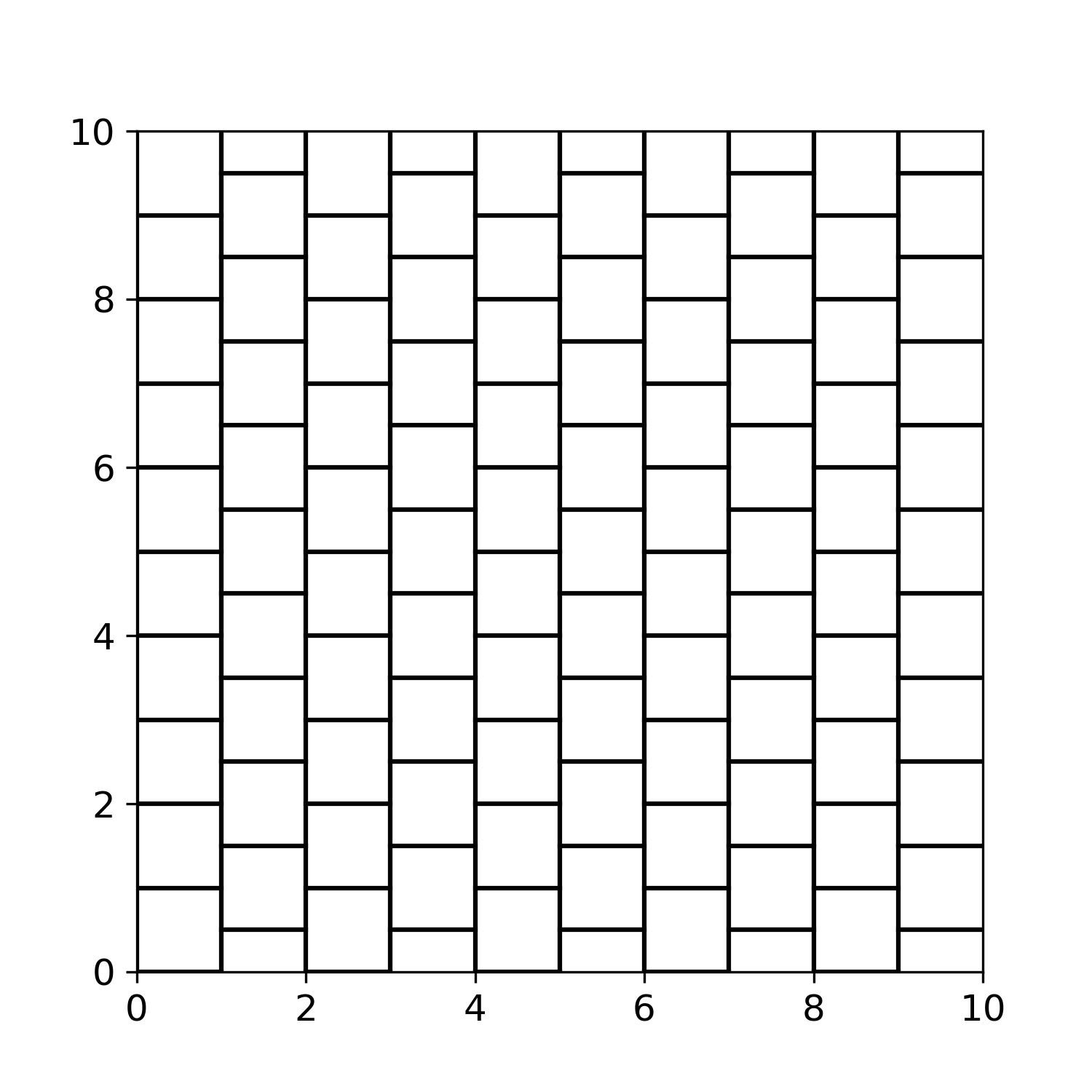}
    \caption{The existence of this shifted grid clearly shows that $\bbR^2$ has the internally connected partition property. This definition of a shifted grid can be extended to higher dimensions to show that $\bbR^d$ has the internally connected partition property for any $d \ge 2$. (This is trivially true in dimension $d=1$ where a regular grid can be used to show that $\bbR$ has the internally connected partition property.)}
    \label{fig:grid}
\end{figure}

\begin{proposition}
\label{prp:euclidean}
Any Euclidean space has the internally connected partition property.
\end{proposition}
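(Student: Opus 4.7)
The plan is to construct, for each $d \ge 1$ and each $r > 0$, an explicit locally finite partition of $\bbR^d$ whose cells are bounded, connected with connected interior, of diameter at most $r$, and such that whenever the closures of two cells intersect, their intersection contains an open subset of a $(d-1)$-dimensional hyperplane. This last condition implies the internally connected property, because the relative interior of the shared $(d-1)$-dim face lies in $\mathrm{int}(\overline{A_i} \cup \overline{A_j})$ and connects the interiors of $A_i$ and $A_j$, while any pair sharing only a strictly lower-dimensional face would fail to have connected interior union.

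For $d = 1$, the partition of $\bbR$ into half-open intervals of length $r$ is trivially adequate. For $d = 2$, the shifted brick pattern illustrated in the figure works: set $c = r/\sqrt{2}$ and define
\[
A_{(k_1,k_2)} = \bigl[k_1 c + (k_2 \bmod 2)(c/2),\, (k_1+1)c + (k_2 \bmod 2)(c/2)\bigr) \times \bigl[k_2 c,\, (k_2+1)c\bigr),
\]
so that every other row is shifted horizontally by $c/2$. A direct case analysis shows that any two bricks whose closures meet share a common edge of positive length, as required.

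For $d \ge 3$ we proceed by induction on dimension. Given a valid partition $\{B_j\}$ of $\bbR^{d-1}$ produced by the previous step, slice $\bbR^d$ into horizontal slabs of thickness $c = r/\sqrt{d}$ and fill slab $n$ with the translated cells $B_j + n v$ for $j \in \bbZ^{d-1}$, where $v \in \bbR^{d-1}$ is a shift vector with coordinates chosen to be small irrational multiples of the lower-dimensional cell sizes. Cell diameters, local finiteness, and connectedness are straightforward. Within a single slab, the $(d-2)$-dimensional face shared by two adjacent cells of $\{B_j\}$ thickens to a $(d-1)$-dimensional face in $\bbR^d$; across adjacent slabs, the two cells' footprints on the separating hyperplane are translates of cells of $\{B_j\}$ differing by $v$, so that any non-empty overlap of these footprints becomes a $(d-1)$-dimensional region whenever $v$ is generic.

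The main obstacle is rigorously establishing this genericity: a naive shift such as $v = (c/2, \ldots, c/2)$ can produce degenerate pairs of cells in different slabs that meet along only a codimension-two or lower face (e.g., at corners of the lower-dimensional brick structure), which would violate the internally connected property. One must argue, through a careful case analysis of the combinatorial structure of the inductively-constructed brick partition, that a sufficiently generic (e.g., irrationally-chosen) $v$ avoids all such degeneracies simultaneously and in every slab. Once this is verified, the inductive step closes and the proposition follows.
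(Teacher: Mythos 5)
Your proposal takes essentially the same route as the paper---build the partition by stacking slabs, with the $(d-1)$-dimensional brick pattern in each slab shifted relative to its neighbours---but, as you acknowledge, it stops before verifying the one step that carries all the difficulty. The sentence ``one must argue, through a careful case analysis \ldots\ that a sufficiently generic $v$ avoids all such degeneracies'' is a promissory note for the entire inductive step, so as written the proposal has a genuine gap.

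What makes this worth dwelling on is that the paper's own proof falls into precisely the trap you warn about. The paper shifts by $(\tfrac12,\ldots,\tfrac12)$ between consecutive slabs and asserts that whenever $y_d = x_d + 1$ and $\overline{A_x}$, $\overline{A_y}$ meet, one has $y_i = x_i \pm \tfrac12$ for all $i < d$. That assertion already fails for $d = 3$: both $(0,0,0)$ and $(1,\tfrac12,1)$ belong to $\cL_3$ (for the second, $(1+\tfrac12,\ \tfrac12+\tfrac12) = (\tfrac32,1)\in\cL_2$ since $1$ is odd and $\tfrac32+\tfrac12 = 2\in\bbZ$), yet
\[
\overline{A_{(0,0,0)}} \cap \overline{A_{(1,\,1/2,\,1)}} \;=\; \{1\}\times[\tfrac12,1]\times\{1\},
\]
a segment of codimension two, so $\text{int}\big(\overline{A_{(0,0,0)}}\cup\overline{A_{(1,1/2,1)}}\big)$ is disconnected and $\cA_3$ as constructed does \emph{not} have the internally connected property. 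The culprit is that $\cL_2$ already contains points with first coordinate in $\tfrac12\bbZ$, so a cross-slab shift by $\pm\tfrac12$ can realign those onto integers. The repair is what you gesture at: choose a shift that dodges this. Concretely, at the step from dimension $d-1$ to dimension $d$ shift each of the first $d-1$ coordinates by $2^{-(d-1)}$; a straightforward induction shows the difference set of the dimension-$(d-1)$ lattice lies coordinatewise in $2^{-(d-2)}\bbZ$, and since $2^{-(d-2)}\bbZ + 2^{-(d-1)}$ never contains $\pm 1$, no pair of cells in adjacent slabs can differ by exactly $1$ in any coordinate, so their closed intersections are always $(d-1)$-dimensional and the induction closes. (An irrational shift works equally well.) Supplying this bookkeeping would make your argument complete---and the paper's proof needs the same correction.
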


\begin{proof}
Consider the Euclidean space $\bbR^d$ (equipped with its Euclidean norm). It is enough to show that there is a a locally finite partition $\{A_i\}$ that has the internally connected property and is such that, for all $i$, $\text{int}(A_i)$ is connected and $\diam(A_i) \le \sqrt{d}$.

Let $\cL_1 = \bbZ$, and for $d \geq 2$, define 
\begin{align}
\cL_{d} = \Big\{ (x_1, \dots, x_d ) : x_d &\in 2\bbZ \text{ and } (x_1, \dots, x_{d-1})  \in \cL_{d-1}; \\\text{ or }  x_d &\in 2\bbZ + 1 \text{ and } (x_1 + \tfrac{1}{2}, \dots, x_{d-1} + \tfrac{1}{2})  \in \cL_{d-1} \Big\}.
\end{align}
For $(x_1, \dots, x_d) \in \cL_d$ define the corresponding cell 
$$A_{(x_1, \dots, x_d) } = [x_1, x_1 +1 ) \times \dots \times [x_d, x_d +1 ).$$
And consider the collection of these cells
\[
\cA_d = \big\{A_{(x_1, \dots, x_d)} : (x_1, \dots, x_d) \in \cL_d\big\}.
\]

Each of these cells has connected interior and has diameter $\sqrt{d}$. 
Moreover, $\cA_d$ is a partition of the entire space $\bbR^d$.
And, as a partition, $\cA_d$ is clearly locally finite. The partition is depicted for $d=2$ in \figref{grid}.

We now prove that $\cA_d$ has the internally connected property. We will proceed by induction on $d$.
For $d=1$, this is clear. 
For $d \ge 2$, assume that $\cA_{d-1}$ has the internally connected property.
Consider $(x_1, \dots, x_d)$ and $(y_1, \dots, y_d)$, both in $\cL_d$, such that $\overline{A_{(x_1, \dots, x_d)}} \cup \overline{A_{(y_1, \dots, y_d)}}$ is connected. We want to show that $\text{int}\big(\overline{A_{(x_1, \dots, x_d)}} \cup \overline{A_{(y_1, \dots, y_d)}}\big)$ is connected too. 
By induction, $\{A_{(z_1, \dots, z_d)} : z_d = x_d\}$ has the internally connected property, so that it is enough to consider a situation where $y_d \neq x_d$. Suppose, for example, that $y_d >  x_d$. 
In that case, the fact that $\overline{A_{(x_1, \dots, x_d)}} \cup \overline{A_{(y_1, \dots, y_d)}}$ is connected implies that $y_d = x_d + 1$ and $y_i = x_i \pm \frac{1}{2}$ for $1 \leq i \leq d-1$.
Further, 
\[\text{int}(\overline{A_{(x_1, \dots, x_d)}} \cup \overline{A_{(y_1, \dots, y_d)}}) = \text{int}(A_{(x_1, \dots, x_d)}) \cup \text{int}(A_{(y_1, \dots, y_d)}) \cup C,\] 
where 
$$C = \Big\{(z_1, z_2, \dots, z_{d-1}, x_d + 1) : x_i + \tfrac{1}{4} + \tfrac{1}{4}\sign(y_i - x_i) \leq z_i \leq  x_i + \tfrac{3}{4} + \tfrac{1}{4}\sign(y_i - x_i)\Big\}.$$
And the fact that $C \subseteq \partial A_{(x_1, \dots, x_d)} \cap \partial A_{(y_1, \dots, y_d)}$ proves that the union above is connected.
\end{proof}

\end{document}